\documentclass[runningheads]{llncs}
\usepackage[T1]{fontenc}
\usepackage{graphicx}
\usepackage{booktabs}
\usepackage{amsmath}
\usepackage{amssymb}
\usepackage{xcolor}
\usepackage{hyperref}
\usepackage{cleveref}
\usepackage{listings}
\usepackage{xspace}
\newtheorem{assumption}{Assumption}
\newcommand{\llbracket}{[\![}
\newcommand{\rrbracket}{]\!]}

\lstdefinelanguage{turtle}{
  morekeywords={@prefix, a},
  morestring=[b]",
  morecomment=[l]{\#},
  sensitive=true
}
\begin{document}
\title{Denotational Semantics for ODRL: 
       Knowledge-Based Constraint Conflict Detection}
\titlerunning{Denotational Semantics for ODRL Conflict Detection}

\author{
  Daham M. Mustafa\inst{1,2} \and
  Diego Collarana\inst{2} \and
  Yixin Peng\inst{1} \and
  Rafiqul Haque\inst{3} \and
  Christoph Lange-Bever\inst{1,2} \and
  Christoph Quix\inst{1,2} \and
  Stefan Decker\inst{1,2}
}
\authorrunning{D. Mustafa et al.}

\institute{
  RWTH Aachen University, Germany \and
  Fraunhofer FIT, Sankt Augustin, Germany \and
  University of Galway, Ireland \\
  \email{daham.mustafa@rwth-aachen.de}
}
\date{}
\maketitle
\begin{abstract}
ODRL's six set-based operators---\texttt{isA}, \texttt{isPartOf}, 
\texttt{hasPart}, \texttt{isAnyOf}, \texttt{isAllOf}, 
\texttt{isNoneOf}---depend on external domain knowledge that 
the W3C specification leaves unspecified. Without it, every 
cross-dataspace policy comparison defaults to \textsc{Unknown}.
We present a denotational semantics that maps each ODRL 
constraint to the set of knowledge-base concepts satisfying 
it. Conflict detection reduces to denotation intersection 
under a three-valued verdict---\textsc{Conflict}, 
\textsc{Compatible}, or \textsc{Unknown}---that is sound 
under incomplete knowledge. The framework covers all three 
ODRL composition modes (\texttt{and}, \texttt{or}, 
\texttt{xone}) and all three semantic domains arising in 
practice: taxonomic (class subsumption), mereological 
(part-whole containment), and nominal (identity). For 
cross-dataspace interoperability, we define order-preserving 
alignments between knowledge bases and prove two guarantees: 
conflicts are preserved across different KB standards, and 
unmapped concepts degrade gracefully to 
\textsc{Unknown}---never to false conflicts. A runtime 
soundness theorem ensures that design-time verdicts hold 
for all execution contexts.
The encoding stays within the decidable EPR fragment of 
first-order logic. We validate it with 154~benchmarks 
across six knowledge base families (GeoNames, ISO~3166, 
W3C DPV, a GDPR-derived taxonomy, BCP~47, and ISO~639-3) 
and four structural KBs targeting adversarial edge cases. 
Both the Vampire theorem prover and the Z3 SMT solver 
agree on all 154~verdicts. A key finding is that exclusive 
composition (\texttt{xone}) requires strictly stronger KB 
axioms than conjunction or disjunction: open-world semantics 
blocks exclusivity even when positive evidence appears to 
satisfy exactly one branch.
\keywords{ODRL \and Policy Conflict Detection \and 
Denotational Semantics \and Three-Valued Reasoning \and 
Knowledge Bases \and Automated Reasoning}
\end{abstract}



\section{Introduction}
\label{sec:introduction}

The Open Digital Rights Language (ODRL) is the W3C Recommendation 
for expressing digital policies~\cite{iannella2018odrl}, adopted 
across European data ecosystems including Gaia-X~\cite{gaiax2023architecture}, IDSA and the 
Eclipse Dataspace Connector~\cite{idsa2022connector}. ODRL defines set-based 
operators---\texttt{isA}, \texttt{isPartOf}, \texttt{isAnyOf}, 
\texttt{isNoneOf}---but provides no mechanism to \emph{compute} 
them. Their meaning depends on external domain knowledge that 
the standard deliberately leaves unspecified.

Table~\ref{tab:challenges} traces a scenario from the German 
cultural dataspace (Datenraum Kultur)~\cite{ToubekisDecker2025} through four escalating 
interoperability challenges. The Bayerische Staatsbibliothek (BSB)~\cite{bsb-preisliste}
permits access to its digitized German language manuscripts for 
non-commercial use within Europe. A French national archive 
requests access from France, in French, for scientific research.

\textbf{Challenge~1: Single-KB grounding.} Without a knowledge 
base, an ODRL engine sees incomparable strings and defaults to 
\textsc{Unknown}. Grounding against GeoNames, BCP~47, and W3C~DPV 
yields per-dimension verdicts: \textsc{Compatible} (France 
$\preceq$ Europe), \textsc{Conflict} (French 
$\not\sqsubseteq$ German), \textsc{Unknown} (DPV classifies 
scientific research under R\&D without committing to the 
commercial/non-commercial distinction).

\textbf{Challenge~2: Cross-dataspace composition.} Three 
constraint dimensions must overlap simultaneously under 
conjunction. A single \textsc{Conflict} blocks the request; 
the framework identifies which dimension caused the failure.

\textbf{Challenge~3: Cross-standard alignment.} The BSB uses 
GeoNames~\cite{geonames2024} and DPV~\cite{pandit2024dpv}; the 
French archive uses ISO~3166~\cite{iso3166} and a GDPR-derived 
taxonomy~\cite{gdpr2016}. When a concept lacks a counterpart 
(GeoNames has \texttt{bavaria}; ISO~3166 does not), the 
verdict degrades to \textsc{Unknown}---never to a false 
\textsc{Conflict}. Conflicts detected in one KB must be 
preserved across aligned KBs.

\begin{table*}[t]
\centering
\caption{Four interoperability challenges illustrated through 
a single scenario: BSB permits European access to German-language 
manuscripts for non-commercial use; a French archive requests 
access from France, in French, for scientific research.}

\label{tab:challenges}
\small
\begin{tabular}{@{}p{2.4cm}p{5.6cm}p{4.0cm}p{3.2cm}@{}}
\toprule
\textbf{Challenge} & \textbf{Scenario} & \textbf{Without framework} & \textbf{With framework} \\
\midrule
\textbf{Single-KB grounding}
\newline\textit{Can we compute ODRL's set operators?}
  & Policy uses \texttt{isPartOf}, \texttt{isA}; request uses 
    \texttt{eq}. Three dimensions:
    \newline\texttt{spatial}: France vs.\ Europe
    \newline\texttt{purpose}: ScientificRes.\ vs.\ NonCommercial
    \newline\texttt{language}: fr vs.\ de
  & All three return \textsc{Unknown}---engine sees incomparable 
    strings
  & Per-dimension verdicts:
    \newline\textsc{Compatible} (France $\preceq$ Europe)
    \newline\textsc{Unknown} (R\&D $\sqsubseteq$? NonComm.)
    \newline\textsc{Conflict} (fr $\not\sqsubseteq$ de) \\
\addlinespace[6pt]
\textbf{Cross-dataspace composition}
\newline\textit{Do multi-dimensional policies compose?}
  & BSB policy: 3 constraints. French request: 3 constraints.
    Compatibility requires \emph{all} pairs to overlap.
  & No mechanism to conjoin verdicts across operands
  & Language \textsc{Conflict} blocks entire request. 
    Diagnostic isolates blocking dimension. \\
\addlinespace[6pt]
\textbf{Cross-standard alignment}
\newline\textit{What if each side uses a different KB?}
  & BSB uses GeoNames, DPV, BCP\,47. French archive uses 
    ISO\,3166, GDPR taxonomy, ISO\,639-3. Same scenario, 
    different standards.
  & Policies from different standards are mutually 
    opaque---no interoperability
  & Conflicts preserved across KB pairs. Unmapped concepts 
    degrade to \textsc{Unknown}, never to false 
    \textsc{Conflict}. \\
\bottomrule
\end{tabular}
\end{table*}

None of these challenges is addressed by existing approaches. 
Current ODRL reasoners either ignore set-based operators or 
hardcode domain-specific logic that cannot generalize across 
dataspaces. The result is a ``default deny'' posture that 
blocks legitimate data sharing.

\textbf{Challenge~4: Policy quality assurance.} Policies must 
be internally consistent before data flows. Self-contradictions 
arise when \texttt{and} is written instead of \texttt{or}, 
requiring a purpose to be simultaneously commercial and 
academic---unsatisfiable. Redundancies occur when constraints 
imply each other: requiring France \emph{and} Europe adds no 
restriction since France $\preceq$ Europe. The DSSC blueprint 
further requires downstream policies to only \emph{restrict} 
upstream terms~\cite{dssc2024}. Without formal verification, 
these errors go undetected.

We present a \emph{semantic grounding framework} that addresses 
all four challenges. Each ODRL constraint maps to a 
\emph{denotation}---the set of KB concepts satisfying it. 
Conflict detection reduces to denotation intersection: 
non-empty means \textsc{Compatible}; provably empty means 
\textsc{Conflict}; insufficient KB information means 
\textsc{Unknown}. For cross-standard alignment, we define 
order-preserving injections between concept spaces and prove 
that conflicts are preserved while unmapped concepts degrade 
gracefully. Design-time verdicts guarantee runtime enforcement.

The encoding stays within the Effectively Propositional (EPR) 
fragment of first-order logic, guaranteeing decidability. 
Bidirectional denotation rules are essential: \emph{if}-direction 
axioms populate denotations from KB facts (proving compatibility); 
\emph{only-if} axioms constrain membership (proving conflict). 
Neither direction alone suffices.

ODRL defines 14 KB-dependent left operands across three semantic 
domains: taxonomic ($\sqsubseteq$: purpose, language), 
mereological ($\preceq$: spatial), and nominal (identity: device). 
A single framework handles all 14 uniformly.

\smallskip\noindent\textbf{Contributions.}
(1)~A \textbf{denotational semantics} for all eight ODRL 
KB-dependent operators across three semantic domains, supporting 
conflict detection, self-contradiction analysis, redundancy 
identification, and refinement verification---all under a 
three-valued verdict sound under incomplete knowledge 
(Section~\ref{sec:framework}).
(2)~A \textbf{decidable EPR encoding} with bidirectional 
denotation rules covering all three ODRL composition modes, 
validated by 100\% Vampire/Z3 agreement across 154~benchmarks 
(Section~\ref{sec:evaluation}).
(3)~A \textbf{benchmark suite} of 154~problems across six KB 
families and four structural KBs, revealing that \texttt{xone} 
requires strictly stronger KB axioms than \texttt{and}/\texttt{or} 
(Section~\ref{sec:evaluation}).

This framework operates at the \emph{constraint level}: it 
determines whether constraint denotations can co-exist, not 
whether a complete ODRL rule activates. The W3C ODRL Formal 
Semantics draft~\cite{odrl-formal-semantics} specifies rule 
activation via constraint satisfaction, duty fulfillment, and 
refinement matching. Our framework addresses the logically prior 
question: \emph{can two constraints ever be simultaneously 
satisfied?} Static conflict detection prunes unsatisfiable 
combinations at design time; operational semantics evaluates 
activation at runtime. The two are complementary.

\paragraph{Supplementary Material.}
Related work and empirical runtime analysis will appear in 
the extended version. The benchmark suite, Isabelle/HOL 
theory files, and TPTP/SMT-LIB encodings will be made 
available at \url{https://github.com/Daham-Mustaf/odrl-benchmark} 
upon publication.
\section{Semantic Grounding Framework}
\label{sec:framework}

We formalize the grounding of ODRL constraints against external 
knowledge bases. Conflict detection reduces to testing denotation 
intersection under a three-valued semantics that is sound even 
when knowledge is incomplete.

\begin{definition}[ODRL Constraint]
\label{def:constraint}
A constraint is a tuple $c = (\ell, \bowtie, v)$ where $\ell \in 
\mathcal{L}$ is a left operand, $\bowtie \in \mathcal{O}$ is an 
operator, and $v \in \mathcal{V}$ is a right operand value.
\end{definition}

\begin{definition}[Knowledge Base]
\label{def:kb}
A knowledge base for operand $\ell$ is a structure 
$\mathcal{KB}_\ell = (C, \leq, \perp\!\!\!\perp, \gamma)$ where:
\begin{itemize}
\item $C$ is a finite set of domain concepts,
\item $\leq$ is a reflexive, transitive relation on $C$,
\item $\perp\!\!\!\perp \;\subseteq C \times C$ is a symmetric, 
      irreflexive disjointness relation satisfying 
      \emph{downward closure}: if $x \perp\!\!\!\perp y$, 
      $x' \leq x$, and $y' \leq y$, then 
      $x' \perp\!\!\!\perp y'$,
\item $\gamma : \mathcal{V} \rightharpoonup C$ maps right operand 
      values to concepts ($\gamma(v) = \bot$ when unmapped).
\end{itemize}
The finiteness of $C$ ensures decidability of conflict detection. 
The framework requires only $\leq$ and $\perp\!\!\!\perp$; the 
following classification guides KB selection in practice:
\begin{description}
\item[Taxonomic:] $\leq \,{=}\, \sqsubseteq$ (class subsumption). 
      Operands: \texttt{purpose}, \texttt{language}, \texttt{industry}, 
      \texttt{fileFormat}, \texttt{media}, \texttt{recipient}.
\item[Mereological:] $\leq \,{=}\, \preceq$ (part-whole containment). 
      Operands: \texttt{spatial}, \texttt{virtualLocation}.
\item[Nominal:] $\leq \,{=}\, {=}$ (identity only); 
      $\perp\!\!\!\perp$ is total on distinct concepts 
      ($x \perp\!\!\!\perp y \iff x \neq y$). 
      Operands: \texttt{deliveryChannel}, \texttt{device}, \texttt{event}, 
      \texttt{product}, \texttt{system}, \texttt{unitOfCount}. 
      Under identity, \texttt{isA} degenerates to \texttt{eq} and 
      \texttt{isAllOf} requires all values to coincide.   
\end{description}
\noindent The classification is a deployment guideline; all 
definitions are parameterized by $\leq$ and 
$\perp\!\!\!\perp$ alone.
\end{definition}

\begin{lemma}[Disjointness--Order Consistency]
\label{lem:disj-consistency}
If $x \leq y$ then $\neg(x \perp\!\!\!\perp y)$.
\end{lemma}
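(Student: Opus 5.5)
The plan is a one-step proof by contradiction that combines downward closure with irreflexivity of $\perp\!\!\!\perp$ from Definition~\ref{def:kb}. Suppose $x \leq y$ and, for contradiction, $x \perp\!\!\!\perp y$. The idea is to push the disjoint pair downward until both coordinates coincide, which irreflexivity forbids.

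Downward closure says: if $x \perp\!\!\!\perp y$, $x' \leq x$ and $y' \leq y$, then $x' \perp\!\!\!\perp y'$. Instantiate it with $x' := x$ and $y' := x$. This needs two premises:
\begin{itemize}
\item $x \leq x$, which holds by reflexivity of $\leq$;
\item $x \leq y$, which is the hypothesis.
\end{itemize}
Hence $x \perp\!\!\!\perp x$, contradicting irreflexivity. Therefore $\neg(x \perp\!\!\!\perp y)$.

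Symmetry of $\perp\!\!\!\perp$ and transitivity of $\leq$ are not needed. Transitivity is already absorbed in the fact that downward closure is stated for arbitrary lower bounds. The symmetric case $y \leq x$ would follow in the same way, using symmetry of $\perp\!\!\!\perp$ first.

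There is no real obstacle. The only point requiring care is choosing the instantiation $y' := x$ rather than $x' := y$: the latter would need $y \leq x$, which is not given. It is also worth noting that in the nominal domain the lemma is consistent with the definition $x \perp\!\!\!\perp y \iff x \neq y$, since there $x \leq y$ means $x = y$.
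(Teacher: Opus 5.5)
Your proof is correct and is the paper's argument: instantiate downward closure with $x' = y' = x$, discharging $x \leq x$ by reflexivity and $x \leq y$ by the hypothesis, to obtain $x \perp\!\!\!\perp x$, which contradicts irreflexivity. Your explicit pairing of each instantiated premise with its justification is a bit clearer than the paper's one-line version, which leaves the use of the hypothesis $x \leq y$ implicit.
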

\begin{proof}
Suppose $x \leq y$ and $x \perp\!\!\!\perp y$. By reflexivity, 
$x \leq x$. Downward closure with $x' = y' = x$ gives 
$x \perp\!\!\!\perp x$, contradicting irreflexivity.
\end{proof}

\begin{assumption}[KB Correctness]
\label{assm:kb-correctness}
For grounded values ($\gamma(v) \neq \bot$), KB relations---both 
$\leq$ and $\perp\!\!\!\perp$---accurately model domain semantics. 
Knowledge bases may be incomplete but contain no incorrect positive 
($\leq$) or negative ($\perp\!\!\!\perp$) assertions.
\end{assumption}

\begin{table}[t]
\centering
\caption{KB-dependent ODRL operands with semantic domain classification}
\label{tab:kb-operands}
\small
\begin{tabular}{lllll}
\toprule
\textbf{Left Operand} & \textbf{ODRL Specification} & \textbf{Domain} & \textbf{Relation} & \textbf{$\perp\!\!\!\perp$ Source} \\
\midrule
\texttt{language}        & ``A natural language''              & Taxonomic     & $\sqsubseteq$ & Ontology \\
\texttt{purpose}         & ``A defined purpose''               & Taxonomic     & $\sqsubseteq$ & Ontology \\
\texttt{industry}        & ``A defined industry sector''       & Taxonomic     & $\sqsubseteq$ & Ontology \\
\texttt{fileFormat}      & ``A transformed file format''       & Taxonomic     & $\sqsubseteq$ & Ontology \\
\texttt{media}           & ``Category of a media asset''       & Taxonomic     & $\sqsubseteq$ & Ontology \\
\texttt{recipient}       & ``The party receiving the result''  & Taxonomic     & $\sqsubseteq$ & Ontology \\
\addlinespace[3pt]
\texttt{spatial}         & ``A named geospatial area''         & Mereological  & $\preceq$ & Siblings \\
\texttt{virtualLocation} & ``A location of the IT space''      & Mereological  & $\preceq$ & Siblings \\
\addlinespace[3pt]
\texttt{deliveryChannel} & ``The delivery channel used''       & Nominal       & ---       & $x \neq y$ \\
\texttt{device}          & ``An identified device''            & Nominal       & ---       & $x \neq y$ \\
\texttt{event}           & ``An identified event''             & Nominal       & ---       & $x \neq y$ \\
\texttt{product}         & ``Category of product or service''  & Nominal       & ---       & $x \neq y$ \\
\texttt{system}          & ``An identified computing system''  & Nominal       & ---       & $x \neq y$ \\
\texttt{unitOfCount}     & ``The unit of measure''             & Nominal       & ---       & $x \neq y$ \\
\bottomrule
\end{tabular}
\end{table}

\subsection{Constraint Interpretation}
\label{sec:interpretation}
Each constraint is mapped to a \emph{denotation}: the set of domain 
concepts satisfying it. When the KB cannot ground a value, the 
denotation is $\top$ (indeterminate, i.e., not computable from the 
KB---distinct from the universal concept in description logics).

\begin{definition}[Constraint Denotation]
\label{def:denotation}
Let $\mathcal{KB}_\ell = (C, \leq, \perp\!\!\!\perp, \gamma)$ be 
the knowledge base for operand $\ell$, and let 
$c = (\ell, \bowtie, v)$ with $g = \gamma(v)$. If $g = \bot$, 
then $\llbracket c \rrbracket_{\mathcal{KB}} = \top$. Otherwise:
\smallskip
\noindent\textit{Equality operators (all domains):}
\begin{align}
\llbracket \ell \texttt{ eq } v \rrbracket_{\mathcal{KB}} &= \{g\} \\
\llbracket \ell \texttt{ neq } v \rrbracket_{\mathcal{KB}} &= C \setminus \{g\}
\end{align}
\noindent\textit{Hierarchical operators:}
\begin{align}
\llbracket \ell \texttt{ isA } v \rrbracket_{\mathcal{KB}} &= 
  \{x \in C \mid x \leq g\} \\
\llbracket \ell \texttt{ isPartOf } v \rrbracket_{\mathcal{KB}} &= 
  \{x \in C \mid x \leq g\}
\end{align}
\texttt{isA} and \texttt{isPartOf} receive identical 
denotations because the semantic distinction is carried by 
the KB's ordering $\leq$, not the operator 
label\footnote{The ODRL vocabulary glosses \texttt{isA} as 
instance-of and \texttt{isPartOf} as contained-by, but 
provides no distinct formal evaluation rules---both test 
downward closure in a hierarchy. Our framework recovers the 
intended distinction through KB selection: taxonomic KBs 
supply $\sqsubseteq$, mereological KBs supply $\preceq$ 
(Definition~\ref{def:kb}). The operator label is thus 
redundant given the KB's semantic domain.} 
(cf.\ Definition~\ref{def:kb}).
\begin{align}
\llbracket \ell \texttt{ hasPart } v \rrbracket_{\mathcal{KB}} &= 
  \{x \in C \mid g \leq x\}
\end{align}
\noindent\textit{Set-valued operators} (with $g_i = \gamma(v_i)$; 
if any $g_i = \bot$, then $\llbracket c \rrbracket_{\mathcal{KB}} = \top$):
\begin{align}
\llbracket \ell \texttt{ isAnyOf } \{v_1, \ldots, v_n\} \rrbracket_{\mathcal{KB}} &= 
  \textstyle\bigcup_{i=1}^{n} \{x \in C \mid x \leq g_i\} \\
\llbracket \ell \texttt{ isAllOf } \{v_1, \ldots, v_n\} \rrbracket_{\mathcal{KB}} &= 
  \{x \in C \mid \forall i : x \leq g_i\} \\
\llbracket \ell \texttt{ isNoneOf } \{v_1, \ldots, v_n\} \rrbracket_{\mathcal{KB}} &= 
  C \setminus \textstyle\bigcup_{i=1}^{n} \{x \in C \mid x \leq g_i\}
\end{align}
\noindent In all cases, $\leq$ is the ordering from 
$\mathcal{KB}_\ell$ (Definition~\ref{def:kb}).
\end{definition}

\begin{definition}[Conservative Intersection]
\label{def:intersection}
For denotations $D_1, D_2 \in \mathcal{P}(C) \cup \{\top\}$:
\begin{equation}
D_1 \sqcap D_2 = 
\begin{cases}
\emptyset & \text{if } D_1 = \emptyset \text{ or } D_2 = \emptyset \\
D_1 \cap D_2 & \text{if } D_1, D_2 \neq \top \\
\top & \text{otherwise}
\end{cases}
\end{equation}
The first case ensures that a provably empty denotation yields 
$\emptyset$ regardless of the other operand, since 
$\emptyset \cap S = \emptyset$ for any set $S$---including 
indeterminate sets. Here $\top$ denotes epistemic indeterminacy---the 
denotation cannot be computed from the KB---not the universal concept 
set $C$. The domain $\mathcal{P}(C) \cup \{\top\}$ forms a flat 
information ordering where $\top$ sits above all classical sets; 
$\sqcap$ is monotone with respect to this ordering.
\end{definition}


\begin{definition}[Conflict Detection]
\label{def:conflict}
For constraints $c_1, c_2$ over the same operand $\ell$:
\begin{equation}
\text{verdict}(c_1, c_2) = 
\begin{cases}
\textsc{Conflict}    & \text{if } \llbracket c_1 \rrbracket_{\mathcal{KB}} \sqcap 
                              \llbracket c_2 \rrbracket_{\mathcal{KB}} = \emptyset \\
\textsc{Compatible}  & \text{if } \llbracket c_1 \rrbracket_{\mathcal{KB}} \sqcap 
                              \llbracket c_2 \rrbracket_{\mathcal{KB}} \notin 
                              \{\emptyset, \top\} \\
\textsc{Unknown}     & \text{if } \llbracket c_1 \rrbracket_{\mathcal{KB}} \sqcap 
                              \llbracket c_2 \rrbracket_{\mathcal{KB}} = \top
\end{cases}
\end{equation}
\noindent When $\gamma$ is total on all values appearing in a 
policy set, no denotation equals $\top$ and every verdict is 
definite (\textsc{Conflict} or \textsc{Compatible}).
\end{definition}

\begin{theorem}[Soundness]
\label{thm:soundness}
If $\text{verdict}(c_1, c_2) = \textsc{Conflict}$, then no value 
simultaneously satisfies both constraints.
\end{theorem}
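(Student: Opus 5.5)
We argue directly from Definitions~\ref{def:conflict} and~\ref{def:intersection}. The first step is to pin down what ``a value satisfies $c$'' means. For a concept $x \in C$ we say $x$ satisfies $c$ when $x \in \llbracket c \rrbracket_{\mathcal{KB}}$ with $\llbracket c \rrbracket_{\mathcal{KB}} \neq \top$. For a constraint whose values are ungrounded, satisfaction is fixed by the true domain semantics, and the only thing we know is that the true satisfying set is \emph{some} subset $S \subseteq C$. By Assumption~\ref{assm:kb-correctness}, for grounded constraints the computed denotation coincides with the true satisfying set. The KB contains no incorrect $\leq$ assertions, so membership in $\{x \mid x \leq g\}$ and in its unions, intersections and complements is exactly domain satisfaction. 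The theorem then reduces to the following claim: if $\llbracket c_1 \rrbracket \sqcap \llbracket c_2 \rrbracket = \emptyset$, then for all admissible true satisfying sets $S_1, S_2$ we have $S_1 \cap S_2 = \emptyset$.

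Next we split on the three cases of Definition~\ref{def:intersection} that can produce $\emptyset$.

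\emph{Case (a): one denotation is provably empty}, say $\llbracket c_1 \rrbracket = \emptyset$. Then no value satisfies $c_1$ at all. Whatever $S_2$ is, including the indeterminate case $\top$, we have $\emptyset \cap S_2 = \emptyset$. This is exactly the justification given after Definition~\ref{def:intersection}, and it is where the flat-ordering monotonicity enters: refining $\top$ to any concrete set cannot make the result nonempty.

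\emph{Case (b): both denotations are concrete sets and $D_1 \cap D_2 = \emptyset$.} These sets are the true satisfying sets, so no $x$ lies in both.

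\emph{Case (c): the third branch yields $\top$.} It never yields $\emptyset$, so it is excluded by the hypothesis.

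The main obstacle is making ``satisfies'' precise enough that Assumption~\ref{assm:kb-correctness} does real work. In particular, one must argue that complement-based operators (\texttt{neq}, \texttt{isNoneOf}) are still faithful, since complementation is not monotone under KB incompleteness. Our plan is to state explicitly that $C$ is the fixed domain of discourse for operand $\ell$ and that, relative to $C$, the $\leq$ relation is taken as accurate. Denotations are then exact set expressions over a correct relation, and the case analysis above closes the proof. If a weaker reading of the assumption is intended, under which $\leq$ may be missing edges, we would restrict the claim to the case where $\leq$ is complete on grounded concepts, and we would note that the $\perp\!\!\!\perp$ axioms plus Lemma~\ref{lem:disj-consistency} are what certify emptiness in the EPR encoding.
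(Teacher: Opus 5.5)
Your proof is correct and follows the paper's own argument. Both split Definition~\ref{def:intersection} into two cases, each closed by Assumption~\ref{assm:kb-correctness}: both denotations are grounded with empty intersection, or one denotation is $\emptyset$ while the other may be $\top$. You read that assumption explicitly as making $\leq$ exact on $C$, which the paper only uses implicitly and which is genuinely needed. Note, though, that missing $\leq$ edges mainly threaten the downward-closure operators (\texttt{isA}, \texttt{isAllOf}, etc.), whose denotations shrink and can become spuriously empty, rather than complements like \texttt{isNoneOf}, which only grow.
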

\begin{proof}
\textsc{Conflict} requires $\llbracket c_1 \rrbracket \sqcap 
\llbracket c_2 \rrbracket = \emptyset$. Two cases arise:

\textit{Case 1:} Both denotations are grounded (not $\top$) with 
$\llbracket c_1 \rrbracket \cap \llbracket c_2 \rrbracket = \emptyset$. 
By Assumption~\ref{assm:kb-correctness}, no concept exists in both 
denotations.

\textit{Case 2:} One denotation is $\emptyset$ (the other may be $\top$). 
Without loss of generality, let $\llbracket c_1 \rrbracket = \emptyset$. 
By Definition~\ref{def:denotation}, no KB concept satisfies $c_1$. 
By Assumption~\ref{assm:kb-correctness}, this is accurate: no runtime 
value can satisfy $c_1$, hence no value satisfies both constraints.
\end{proof}

\begin{assumption}[Operand Independence]
\label{assm:operand-independence}
KB axioms are operand-local: no axiom relates concepts across 
different operands. Each operand $\ell$ is interpreted over its 
own concept space $C_\ell$, and witnesses for different operands 
are drawn from disjoint domains. This justifies decomposing 
multi-operand verdicts into independent per-operand tests.
\end{assumption}

\begin{definition}[Constraint Composition]
\label{def:composition}
Given two ODRL rules (e.g., a permission and a prohibition) 
whose refinement constraints share operands 
$\ell_1, \ldots, \ell_n$, each operand yields a constraint 
pair $(c_1^{(k)}, c_2^{(k)})$---one from each rule. ODRL 
groups these pairs using three modes. Let 
$V_k = \text{verdict}(c_1^{(k)}, c_2^{(k)})$ denote the 
per-operand verdict from Definition~\ref{def:conflict}.
\smallskip
\noindent\textit{Conjunction} (\texttt{and}): all operand pairs 
must overlap simultaneously.
\begin{equation}
\text{verdict}_{\texttt{and}} = 
\begin{cases}
\textsc{Compatible} & \text{if } \forall k: V_k = \textsc{Compatible} \\
\textsc{Conflict}   & \text{if } \exists k: V_k = \textsc{Conflict} \\
\textsc{Unknown}    & \text{otherwise}
\end{cases}
\end{equation}
\noindent\textit{Disjunction} (\texttt{or}): at least one operand 
pair must overlap.
\begin{equation}
\text{verdict}_{\texttt{or}} = 
\begin{cases}
\textsc{Compatible} & \text{if } \exists k: V_k = \textsc{Compatible} \\
\textsc{Conflict}   & \text{if } \forall k: V_k = \textsc{Conflict} \\
\textsc{Unknown}    & \text{otherwise}
\end{cases}
\end{equation}
\noindent\textit{Exclusive disjunction} (\texttt{xone}): exactly 
one operand pair must overlap, and exclusivity must be provable.
\begin{equation}
\text{verdict}_{\texttt{xone}} = 
\begin{cases}
\textsc{Compatible} & \text{if } \exists! k: V_k = \textsc{Compatible} \\
                     & \quad\text{and } \forall j \neq k: V_j = \textsc{Conflict} \\
\textsc{Conflict}   & \text{if } \forall k: V_k = \textsc{Conflict} \\
\textsc{Unknown}    & \text{otherwise}
\end{cases}
\end{equation}
The composition follows three-valued Kleene semantics: 
\textsc{Unknown} propagates conservatively through conjunction 
and exclusive disjunction, while disjunction can resolve it if 
one branch yields a definite verdict. Conjunction and disjunction 
require only positive knowledge to reach definite verdicts. 
Exclusive disjunction additionally requires \emph{provable 
non-overlap} of the other branch---via $\perp\!\!\!\perp$ axioms 
in the KB, contradictory operators, or structural impossibility. 
For example, if $g_1 \perp\!\!\!\perp g_2$ from the KB's 
disjointness relation, the non-overlapping branch is provably 
empty by downward closure (Definition~\ref{def:kb}). 
When such evidence is absent, open-world 
semantics forces \textsc{Unknown}, even if positive evidence 
appears to satisfy exactly one branch 
(cf.\ ODRL086 in Section~\ref{sec:evaluation}).
\end{definition}

\begin{remark}[Choice of Three-Valued Semantics]
\label{rem:three-valued}
We adopt strong Kleene semantics ($K_3$) because \textsc{Unknown} 
represents epistemic incompleteness, not logical indeterminacy. 
Three requirements guide the choice: (i)~soundness: \textsc{Conflict} 
implies unsatisfiability under all KB completions; (ii)~monotonicity: 
adding KB facts may refine \textsc{Unknown} but never reverse definite 
verdicts; (iii)~compositionality: verdicts combine locally without 
quantifying over completions. Kleene satisfies all three. 
Supervaluations require reasoning over all completions---incompatible 
with decidable EPR. Bochvar is too conservative 
($T \lor \bot = \bot$). Paraconsistent logics address inconsistency, 
not incompleteness.
\end{remark}\begin{remark}[Scope of Composition Operators]
\label{rem:composition-scope}
The ODRL specification defines \texttt{and}, \texttt{or}, and 
\texttt{xone} as \emph{intra-rule} connectives determining a 
single rule's activation~\cite{iannella2018odrl}. Our framework 
applies these same connectives to \emph{cross-rule} conflict 
detection, testing whether constraints from different rules 
(e.g., a permission and a prohibition) can be simultaneously 
satisfied.
\end{remark}
\begin{theorem}[Composition Soundness]
\label{thm:composition-soundness}
If $\operatorname{verdict}_{\texttt{and}}$ yields \textsc{Conflict}, 
then no execution context simultaneously satisfies all constraint 
pairs. Analogously for \texttt{or} and \texttt{xone}.
\end{theorem}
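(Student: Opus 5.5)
The plan is to reduce each composition mode to the per-operand Soundness theorem (Theorem~\ref{thm:soundness}) and use Operand Independence (Assumption~\ref{assm:operand-independence}) to lift per-operand unsatisfiability to the multi-operand setting. First I fix what "satisfies" means for a composite. An execution context is a tuple $(w_1,\ldots,w_n)$ with $w_k \in C_{\ell_k}$, drawn from the disjoint per-operand domains. It satisfies pair $k$ iff $w_k$ satisfies both $c_1^{(k)}$ and $c_2^{(k)}$. It satisfies the composite iff the corresponding Boolean connective (all / at least one / exactly one) holds over the pair-indicators. By Assumption~\ref{assm:operand-independence}, satisfaction of pair $k$ depends only on $w_k$.

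\emph{Conjunction.} If $\operatorname{verdict}_{\texttt{and}} = \textsc{Conflict}$, then by Definition~\ref{def:composition} there is a $k$ with $V_k = \textsc{Conflict}$. Theorem~\ref{thm:soundness} gives that no $w_k \in C_{\ell_k}$ satisfies both $c_1^{(k)}$ and $c_2^{(k)}$. Hence no context satisfies pair $k$, and so no context satisfies all pairs. Operand independence is what guarantees that no cross-operand axiom could make pair $k$ satisfiable through another operand's witness.

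\emph{Disjunction and exclusive disjunction.} In both modes \textsc{Conflict} requires $V_k = \textsc{Conflict}$ for every $k$. Applying Theorem~\ref{thm:soundness} to each operand, every pair is individually unsatisfiable. So in any context the number of satisfied pairs is $0$: that is not "at least one", which handles \texttt{or}, and not "exactly one", which handles \texttt{xone}. Thus no context satisfies the composite.

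The main obstacle is the interpretive step, namely making precise that Theorem~\ref{thm:soundness} applies pair-by-pair inside a composite context. This is where I would invoke Assumption~\ref{assm:operand-independence} explicitly: the joint context space is the product $\prod_k C_{\ell_k}$, and satisfaction of pair $k$ factors through the $k$-th projection. I would also note that per-operand \textsc{Conflict} under Assumption~\ref{assm:kb-correctness} rules out runtime values, not just KB concepts, so the result holds for all execution contexts. The Case~2 situation of Theorem~\ref{thm:soundness}, where one denotation is $\top$ and the other $\emptyset$, needs no extra work because it is already covered by that theorem's conclusion.
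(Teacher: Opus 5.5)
Your proposal is correct and follows the paper's proof essentially step for step. Conjunction goes through one conflicting operand plus Theorem~\ref{thm:soundness}, disjunction through every operand conflicting, and \texttt{xone} reduces to the disjunction case, with Assumption~\ref{assm:operand-independence} justifying the per-operand decomposition; your product-space model of execution contexts simply makes explicit what the paper states informally.
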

\begin{proof}
\textit{Conjunction.} \textsc{Conflict} requires 
$\exists k: V_k = \textsc{Conflict}$. By 
Theorem~\ref{thm:soundness}, no value satisfies both 
$c_1^{(k)}$ and $c_2^{(k)}$. By 
Assumption~\ref{assm:operand-independence}, operands are 
interpreted over disjoint concept spaces, so satisfying other 
operand pairs cannot provide a witness for operand $\ell_k$. 
The conjunction fails.

\textit{Disjunction.} \textsc{Conflict} requires 
$\forall k: V_k = \textsc{Conflict}$. By 
Theorem~\ref{thm:soundness} applied to each $k$, no operand 
pair admits a satisfying value. By 
Assumption~\ref{assm:operand-independence}, witnesses for 
distinct operands are drawn from disjoint domains, so no 
disjunct succeeds.

\textit{Exclusive disjunction.} \textsc{Conflict} requires 
$\forall k: V_k = \textsc{Conflict}$, reducing to the 
disjunction case.
\end{proof}

\begin{proposition}[Decidability]
\label{prop:decidability}
For any finite $\mathcal{KB}_\ell$ and grounded constraints 
$c_1, c_2$, $\operatorname{verdict}(c_1, c_2)$ is computable.
\end{proposition}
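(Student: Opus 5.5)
We argue directly from Definitions~\ref{def:denotation}, \ref{def:intersection}, and~\ref{def:conflict}: every step needed to compute a verdict is a finite set-theoretic operation over the finite set $C$. The plan has three stages. First, show that each denotation $\llbracket c_i \rrbracket_{\mathcal{KB}}$ is a computable finite subset of $C$. Second, show that the conservative intersection is computable. Third, show that the case split of Definition~\ref{def:conflict} is decidable. Because $c_1, c_2$ are grounded, every relevant $g = \gamma(v)$ (and every $g_i$ in the set-valued cases) lies in $C$. Hence the $\top$ branch of Definition~\ref{def:denotation} never fires, and both denotations lie in $\mathcal{P}(C)$.

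For the first stage, we take $\mathcal{KB}_\ell$ to be finitely presented: $C$ is a finite list, and $\leq$ and $\perp\!\!\!\perp$ are finite relations given extensionally, hence decidable. If $\leq$ is given only by generating facts, we first form its reflexive--transitive closure, which terminates in at most $|C|^3$ steps by Warshall's algorithm.

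Each clause of Definition~\ref{def:denotation} is then a bounded comprehension over $C$:
\begin{itemize}
\item \texttt{eq}, \texttt{neq}: membership tests against the single concept $g$;
\item \texttt{isA}, \texttt{isPartOf}: filter $C$ by $x \leq g$;
\item \texttt{hasPart}: filter $C$ by $g \leq x$;
\item \texttt{isAnyOf}, \texttt{isAllOf}, \texttt{isNoneOf}: a finite union, intersection, or complement over the $n$ values $g_1,\ldots,g_n$.
\end{itemize}
Each clause is computed by iterating over $C$ and testing a decidable predicate. The cost is $O(n\,|C|)$ membership tests for the set-valued operators.

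For the second and third stages, both operands of $\sqcap$ lie in $\mathcal{P}(C)$. The first two cases of Definition~\ref{def:intersection} therefore apply, and the result is the finite set $\llbracket c_1\rrbracket \cap \llbracket c_2\rrbracket$, computed elementwise. The verdict is \textsc{Conflict} exactly when this set is empty, and \textsc{Compatible} otherwise. Emptiness of a finite explicit set is decidable. As the remark after Definition~\ref{def:conflict} notes, the \textsc{Unknown} case cannot arise here, so the three cases are exhaustive and mutually exclusive.

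The main obstacle is not computational but a matter of representation. Definition~\ref{def:kb} asserts that $\leq$ and $\perp\!\!\!\perp$ are relations on a finite set, but it does not explicitly say they are given effectively. I would make the finite-presentation assumption explicit, together with the closure computation for $\leq$. I would also observe that $\perp\!\!\!\perp$ is not even needed for computing a single pairwise verdict; it matters only for \texttt{xone} composition, which is outside the scope of this proposition.

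Optionally, the same argument yields an alternative route via the EPR encoding: with finitely many constants and no function symbols, satisfiability is decidable. However, the direct enumeration above suffices and avoids any dependence on the encoding.
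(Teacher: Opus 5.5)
Your proof is correct and follows the same route as the paper's, which argues that finiteness of $C$ makes each denotation a finite, enumerable subset of $C$ and that intersection and emptiness testing over finite sets are decidable. Your additions are sound refinements of points the paper leaves implicit: you make the effective presentation of $\leq$ explicit, you note that groundedness rules out the $\top$ branch so \textsc{Unknown} cannot occur, and you observe that $\perp\!\!\!\perp$ plays no role in a pairwise verdict. For completeness, the assumption that $\gamma$ is effective on the finitely many right-operand values involved should be added to your list of representation assumptions.
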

\begin{proof}
Finiteness of $C$ ensures each denotation 
(Definition~\ref{def:denotation}) is a finite, enumerable 
subset of $C$. Set intersection and emptiness testing over 
finite sets are decidable.
\end{proof}

\begin{table}[t]
\centering
\caption{Conflict detection verdicts and their prover encodings. 
The negated-conjecture pattern maps each verdict to a unique 
SZS/SMT-LIB result pair.}
\label{tab:verdicts}
\small
\begin{tabular}{lllll}
\toprule
\textbf{$\llbracket c_1 \rrbracket \sqcap \llbracket c_2 \rrbracket$} 
  & \textbf{Verdict} 
  & \textbf{Conjecture} 
  & \textbf{SZS}~\cite{sutcliffe2008szs}
  & \textbf{SMT} \\
\midrule
$\emptyset$        & \textsc{Conflict}   & $\neg\exists x(\ldots)$ & Theorem        & \texttt{unsat} \\
$S \neq \emptyset$ & \textsc{Compatible} & $\exists x(\ldots)$     & Theorem        & \texttt{unsat} \\
$\top$             & \textsc{Unknown}    & either                  & CounterSat     & \texttt{sat} \\
\bottomrule
\end{tabular}
\end{table}

\begin{definition}[Constraint Subsumption]
\label{def:subsumption}
For constraints $c_1, c_2$ over the same operand $\ell$:
\begin{equation}
\text{subsumes}(c_1, c_2) = 
\begin{cases}
\textsc{Confirmed} & \text{if } \llbracket c_1 \rrbracket, 
                      \llbracket c_2 \rrbracket \neq \top 
                      \text{ and } 
                      \llbracket c_1 \rrbracket \subseteq 
                      \llbracket c_2 \rrbracket \\
\textsc{Refuted}   & \text{if } \llbracket c_1 \rrbracket, 
                      \llbracket c_2 \rrbracket \neq \top 
                      \text{ and } 
                      \llbracket c_1 \rrbracket \not\subseteq 
                      \llbracket c_2 \rrbracket \\
\textsc{Unknown}   & \text{otherwise}
\end{cases}
\end{equation}
When $\text{subsumes}(c_1, c_2) = \textsc{Confirmed}$, we write 
$c_1 \sqsubseteq_c c_2$ and say $c_1$ \emph{refines} $c_2$: 
every value satisfying $c_1$ also satisfies $c_2$, so $c_1$ is 
at least as restrictive. Over grounded denotations, $\sqsubseteq_c$ 
is a preorder; it becomes a partial order on the quotient 
$\mathcal{C}/{\equiv}$ where $c_1 \equiv c_2$ iff 
$\llbracket c_1 \rrbracket = \llbracket c_2 \rrbracket$.
\end{definition}

\begin{lemma}[Conflict Propagation]
\label{lem:conflict-propagation}
If $c_1 \sqsubseteq_c c_2$ and 
$\operatorname{verdict}(c_2, c_3) = \textsc{Conflict}$, then 
$\operatorname{verdict}(c_1, c_3) = \textsc{Conflict}$.
\end{lemma}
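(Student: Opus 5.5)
The plan is to unfold the definitions and argue by cases on how $\operatorname{verdict}(c_2, c_3) = \textsc{Conflict}$ arises under Definition~\ref{def:intersection}. First I record what the hypothesis $c_1 \sqsubseteq_c c_2$ provides. By Definition~\ref{def:subsumption}, $\textsc{Confirmed}$ requires both $\llbracket c_1 \rrbracket \neq \top$ and $\llbracket c_2 \rrbracket \neq \top$, together with $\llbracket c_1 \rrbracket \subseteq \llbracket c_2 \rrbracket$ as subsets of $C$. So both denotations are classical finite sets.

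Next, by Definition~\ref{def:conflict}, the conflict hypothesis means $\llbracket c_2 \rrbracket \sqcap \llbracket c_3 \rrbracket = \emptyset$. Since $\llbracket c_2 \rrbracket \neq \top$, Definition~\ref{def:intersection} leaves three possibilities.
\begin{enumerate}
\item[(a)] $\llbracket c_2 \rrbracket = \emptyset$. Then $\llbracket c_1 \rrbracket \subseteq \emptyset$, so $\llbracket c_1 \rrbracket = \emptyset$. The first clause of $\sqcap$ then gives $\llbracket c_1 \rrbracket \sqcap \llbracket c_3 \rrbracket = \emptyset$ whatever $\llbracket c_3 \rrbracket$ is, including $\top$.
\item[(b)] $\llbracket c_3 \rrbracket = \emptyset$. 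Then the first clause gives $\emptyset$ directly for the pair $(c_1, c_3)$.
\item[(c)] $\llbracket c_3 \rrbracket \neq \top$ and $\llbracket c_2 \rrbracket \cap \llbracket c_3 \rrbracket = \emptyset$. Then $\llbracket c_1 \rrbracket \cap \llbracket c_3 \rrbracket \subseteq \llbracket c_2 \rrbracket \cap \llbracket c_3 \rrbracket = \emptyset$. Because both $\llbracket c_1 \rrbracket$ and $\llbracket c_3 \rrbracket$ are non-$\top$, the second clause applies, or the first clause if either is empty. In either case the result is $\emptyset$.
\end{enumerate}
In every case $\llbracket c_1 \rrbracket \sqcap \llbracket c_3 \rrbracket = \emptyset$, which is $\textsc{Conflict}$ by Definition~\ref{def:conflict}.

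The main point requiring care is case (a), where $\llbracket c_3 \rrbracket$ may be $\top$. Here the conflict arises only from the emptiness short-circuit of $\sqcap$, so monotonicity of set intersection alone does not suffice. I will check explicitly that emptiness transfers from $\llbracket c_2 \rrbracket$ to $\llbracket c_1 \rrbracket$ through the inclusion. This is exactly why it matters that $\textsc{Confirmed}$ excludes $\top$ for $c_1$: otherwise an indeterminate $\llbracket c_1 \rrbracket$ paired with $\llbracket c_3 \rrbracket = \top$ would yield $\textsc{Unknown}$ rather than $\textsc{Conflict}$.
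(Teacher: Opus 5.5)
Your proof is correct and follows the same idea as the paper: the inclusion $\llbracket c_1 \rrbracket \subseteq \llbracket c_2 \rrbracket$ carries the empty intersection with $\llbracket c_3 \rrbracket$ over from $c_2$ to $c_1$. Your explicit case split is more careful than the paper's proof, which asserts that \textsc{Conflict} forces $\llbracket c_3 \rrbracket$ to be grounded and so silently skips the short-circuit case $\emptyset \sqcap \top = \emptyset$; your case~(a) covers exactly that case.
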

\begin{proof}
$c_1 \sqsubseteq_c c_2$ gives 
$\llbracket c_1 \rrbracket \subseteq \llbracket c_2 \rrbracket$ 
with both grounded. \textsc{Conflict} gives 
$\llbracket c_2 \rrbracket \cap \llbracket c_3 \rrbracket = 
\emptyset$ with $\llbracket c_3 \rrbracket$ grounded. Since 
$\llbracket c_1 \rrbracket \subseteq \llbracket c_2 \rrbracket$, 
we have $\llbracket c_1 \rrbracket \cap 
\llbracket c_3 \rrbracket = \emptyset$, and all three denotations 
are grounded, yielding \textsc{Conflict}.
\end{proof}

\subsection{Cross-Dataspace Alignment}
\label{sec:alignment}
The preceding framework assumes a shared KB per operand. In practice, 
dataspaces maintain independent knowledge bases for the same 
domain--GeoNames~\cite{geonames2024} vs.\ ISO~3166~\cite{iso3166} 
for spatial concepts, BCP~47~\cite{bcp47} vs.\ ISO~639-3~\cite{iso639} for language codes, or national purpose taxonomies derived 
from different regulations. A conflict detected in one KB must remain 
valid when translated to another; conversely, translation must never 
fabricate conflicts where none exist. We formalize these guarantees 
through order-preserving alignment.

\begin{definition}[KB Alignment]
\label{def:alignment}
An alignment from $\mathcal{KB}_A = (C_A, \leq_A, 
\perp\!\!\!\perp_A, \gamma_A)$ to $\mathcal{KB}_B = (C_B, \leq_B, 
\perp\!\!\!\perp_B, \gamma_B)$ is an injective partial function 
$\alpha: C_A \rightharpoonup C_B$ satisfying: for all 
$x, y \in \text{dom}(\alpha)$,
\begin{align}
x \leq_A y &\iff \alpha(x) \leq_B \alpha(y) \\
x \perp\!\!\!\perp_A y &\implies 
  \alpha(x) \perp\!\!\!\perp_B \alpha(y)
\end{align}
\noindent Additionally, if $\gamma_A(v) = g \neq \bot$ and 
$\alpha(g) \neq \bot$, then 
${\downarrow}\!g \subseteq \operatorname{dom}(\alpha)$ 
(\emph{witness completeness}).

Injectivity ensures distinct concepts remain distinct. The 
biconditional preserves hierarchical structure in both directions, 
preventing false compatibilities. Disjointness preservation 
($\Rightarrow$ only) retains source incompatibilities. Witness 
completeness ensures all concepts below a mapped grounding value 
are themselves mapped, preventing false conflicts through witness 
loss (Example~\ref{ex:witness-loss}). We write $\alpha(v) = \bot$ 
when $v \notin \operatorname{dom}(\alpha)$.
\end{definition}

\begin{example}[Witness Loss without Witness Completeness]
\label{ex:witness-loss}
Let $C_A = \{a, b, c\}$ with $a \leq b$, $a \leq c$, $b$ and $c$ 
incomparable. For $c_1 = (\ell, \texttt{isA}, v_b)$ and 
$c_2 = (\ell, \texttt{isA}, v_c)$: 
$\llbracket c_1 \rrbracket_A = \{a,b\}$, 
$\llbracket c_2 \rrbracket_A = \{a,c\}$, 
intersection $= \{a\}$, verdict \textsc{Compatible}. 
An alignment $\alpha$ with $\operatorname{dom}(\alpha) = \{b,c\}$ 
satisfies order and disjointness preservation vacuously, but 
$a \notin \operatorname{dom}(\alpha)$. The aligned denotations 
$\{\alpha(b)\}$ and $\{\alpha(c)\}$ have empty intersection, 
fabricating \textsc{Conflict}. Witness completeness prevents this: 
$\alpha(b) \neq \bot$ requires ${\downarrow}\!b = \{a,b\} \subseteq 
\operatorname{dom}(\alpha)$.
\end{example}

Three alignment configurations arise, each with distinct guarantees:
\begin{description}
\item[Total] ($\text{dom}(\alpha) = C_A$). All concepts map. 
  Typical when KBs follow the same standard with different 
  identifiers (e.g., BCP~47 $\leftrightarrow$ ISO~639-3). 
  Verdicts are fully preserved.
\item[Partial] ($\text{dom}(\alpha) \subset C_A$). Some concepts 
  lack counterparts. Typical when KBs differ in granularity 
  (e.g., GeoNames includes cities and regions; ISO~3166 covers 
  only countries). Unmapped concepts degrade verdicts toward 
  \textsc{Unknown} but never introduce false conflicts.
\item[Empty] ($\text{dom}(\alpha) = \emptyset$). KBs are 
  structurally incompatible. All cross-KB verdicts default to 
  \textsc{Unknown}---the status quo without alignment.
\end{description}

\begin{definition}[Aligned Constraint]
\label{def:constraint-alignment}
Given alignment $\alpha: C_A \rightharpoonup C_B$ and constraint 
$c = (\ell, \bowtie, v)$ in $\mathcal{KB}_A$ with 
$g = \gamma_A(v)$:
\begin{equation}
\alpha(c) = 
\begin{cases}
(\ell, \bowtie, v') \text{ with } \gamma'_B(v') = \alpha(g) 
  & \text{if } g \neq \bot \text{ and } \alpha(g) \neq \bot \\
\top & \text{otherwise}
\end{cases}
\end{equation}
When alignment succeeds, $\alpha(c)$ is evaluated over the restricted $\mathcal{KB}'_B = (\alpha(C_A), \leq_B \restriction_{\alpha(C_A)}, \perp\!\!\!\perp_B \restriction_{\alpha(C_A)}, \gamma'_B)$

When it fails, the aligned constraint has indeterminate 
denotation.
\end{definition}

\begin{lemma}[Denotation Preservation]
\label{lem:denotation-equality}
If $\alpha$ is an alignment with 
$\llbracket c \rrbracket_A \subseteq \text{dom}(\alpha)$, then 
interpretation commutes with alignment:
\begin{equation}
\llbracket \alpha(c) \rrbracket_{\mathcal{KB}'_B} = 
\alpha(\llbracket c \rrbracket_A)
\end{equation}
\end{lemma}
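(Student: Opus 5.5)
We argue by case analysis on the operator $\bowtie$, comparing the two sides elementwise. The hypothesis $\llbracket c \rrbracket_A \subseteq \operatorname{dom}(\alpha)$ is used with the conventions that $\alpha(c)\neq\top$ and $\llbracket c\rrbracket_A \neq \top$. Hence $g=\gamma_A(v)\neq\bot$ and $\alpha(g)\neq\bot$; for set-valued operators this holds for every $g_i$. The aligned constraint is therefore $(\ell,\bowtie,v')$ with $\gamma'_B(v')=\alpha(g)$, evaluated over $C'_B=\alpha(C_A)$ with the restricted order $\leq_B$. Write $g'=\alpha(g)$. Every element of $C'_B$ has the form $\alpha(x)$ for some $x\in C_A$. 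Since the goal is $\alpha(\llbracket c\rrbracket_A)$, I will read $C'_B$ as $\alpha(\operatorname{dom}(\alpha))$, the only sensible reading of $\alpha(C_A)$ for a partial $\alpha$. Each concept $y\in C'_B$ then has a unique preimage $x\in\operatorname{dom}(\alpha)$ by injectivity.

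The core step is the same for all hierarchical operators. For $\bowtie\in\{\texttt{isA},\texttt{isPartOf}\}$ we have
$\llbracket\alpha(c)\rrbracket = \{\alpha(x)\mid x\in\operatorname{dom}(\alpha),\ \alpha(x)\leq_B \alpha(g)\}$. The order biconditional of Definition~\ref{def:alignment} turns this into $\{\alpha(x)\mid x\in\operatorname{dom}(\alpha),\ x\leq_A g\}$, which equals $\alpha({\downarrow}g\cap\operatorname{dom}(\alpha))$. The hypothesis, or equally witness completeness, gives ${\downarrow}g\subseteq\operatorname{dom}(\alpha)$, so the set is $\alpha(\llbracket c\rrbracket_A)$. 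The operators \texttt{hasPart}, \texttt{isAnyOf} and \texttt{isAllOf} go through identically. The condition $\alpha(g_i)\leq_B\alpha(x)$, or $\alpha(x)\leq_B\alpha(g_i)$ for all or some $i$, is equivalent to the corresponding $\leq_A$ condition. Since $\alpha$ is a function, the image commutes with unions, and intersections of membership conditions are handled pointwise. For \texttt{eq} both sides equal $\{\alpha(g)\}$.

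The main obstacle is the complement operators \texttt{neq} and \texttt{isNoneOf}. There $\llbracket c\rrbracket_A$ is a complement in $C_A$, but the aligned denotation is a complement in $C'_B$. For \texttt{neq} the aligned side is $C'_B\setminus\{\alpha(g)\}=\alpha(\operatorname{dom}(\alpha)\setminus\{g\})$, using injectivity. This equals $\alpha(C_A\setminus\{g\})$ precisely because the hypothesis $C_A\setminus\{g\}=\llbracket c\rrbracket_A\subseteq\operatorname{dom}(\alpha)$ forces $\operatorname{dom}(\alpha)\supseteq C_A\setminus\{g\}$. The same argument applies to \texttt{isNoneOf}, with $\bigcup_i{\downarrow}g_i$ in place of $\{g\}$: the hypothesis makes the complement fully mapped, witness completeness makes each ${\downarrow}g_i$ mapped, and injectivity with the order biconditional makes $\alpha$ commute with the set difference. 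This is where injectivity is indispensable. Without it, two source concepts could collapse and the image of a difference would not equal the difference of images. Disjointness preservation is not needed for this lemma; it will matter only for the subsequent conflict-preservation result.
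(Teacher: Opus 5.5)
Your proof is correct, and its core matches the paper's: membership in the aligned \texttt{isA} denotation is transferred elementwise through the order biconditional of Definition~\ref{def:alignment}, with the hypothesis $\llbracket c\rrbracket_A\subseteq\operatorname{dom}(\alpha)$ supplying the $\supseteq$ direction. You go beyond the paper's one-line claim that the argument ``extends to all operators defined via monotone closure'' in two ways: you handle \texttt{neq} and \texttt{isNoneOf} explicitly, with complements taken in $C'_B=\alpha(\operatorname{dom}\alpha)$ and commuted with $\alpha$ via injectivity since the hypothesis forces the complement into $\operatorname{dom}(\alpha)$, and you flag the implicit requirement $\alpha(c)\neq\top$, which the hypothesis does not imply for \texttt{isAllOf}, \texttt{neq}, or \texttt{isNoneOf}; these are cases the paper's extension does not literally cover.
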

\begin{proof}
For $c = (\ell, \texttt{isA}, v)$ with $g = \gamma_A(v)$:
($\subseteq$): Let $y \in \llbracket \alpha(c) 
\rrbracket_{\mathcal{KB}'_B}$. Then $y \in \alpha(C_A)$ and 
$y \leq_B \alpha(g)$. Since $y = \alpha(x)$ for some $x \in C_A$, 
the embedding property gives $x \leq_A g$, so 
$x \in \llbracket c \rrbracket_A$ and 
$y \in \alpha(\llbracket c \rrbracket_A)$.
($\supseteq$): Let $y = \alpha(x)$ for 
$x \in \llbracket c \rrbracket_A$. Then $x \leq_A g$ implies 
$\alpha(x) \leq_B \alpha(g)$ by preservation, so 
$y \in \llbracket \alpha(c) \rrbracket_{\mathcal{KB}'_B}$.
The proof extends to all operators whose denotation is defined via 
monotone closure over the KB ordering $\leq$.
\end{proof}

\begin{proposition}[Verdict Preservation]
\label{prop:alignment}
Let $\alpha: C_A \rightharpoonup C_B$ be an alignment 
(Definition~\ref{def:alignment}).
\begin{enumerate}
\item \textbf{Conflict preservation.} If 
  $\llbracket c_1 \rrbracket_A \cup \llbracket c_2 \rrbracket_A 
  \subseteq \text{dom}(\alpha)$ and 
  $\text{verdict}_A(c_1, c_2) = \textsc{Conflict}$, then 
  $\text{verdict}_{\mathcal{KB}'_B}(\alpha(c_1), \alpha(c_2)) = 
  \textsc{Conflict}$.
\item \textbf{Graceful degradation.} If 
  $\alpha(g_i) \neq \bot$ for the grounding values of $c_1, c_2$ 
  but $\llbracket c_i \rrbracket_A \not\subseteq 
  \text{dom}(\alpha)$ for some~$i$, then---without 
  downward-closure totality---the aligned verdict may 
  \emph{strengthen} to a false \textsc{Conflict} 
  (Example~\ref{ex:witness-loss}). With downward-closure totality 
  (Definition~\ref{def:alignment}), this case cannot arise for 
  grounded constraints: $\alpha(g) \neq \bot$ implies 
  ${\downarrow}\!g \subseteq \text{dom}(\alpha)$, so 
  $\llbracket c_i \rrbracket_A \subseteq \text{dom}(\alpha)$ 
  and the verdict reduces to case~(1) or the following: if 
  $\alpha(g_i) = \bot$ for some grounding value, then 
  $\alpha(c_i) = \top$ by 
  Definition~\ref{def:constraint-alignment}, yielding 
  \textsc{Unknown}. In no case does a false \textsc{Conflict} 
  arise.
\end{enumerate}
\end{proposition}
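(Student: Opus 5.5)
The plan is to handle the two parts separately, treating Lemma~\ref{lem:denotation-equality} as the main tool. Injectivity of $\alpha$ will let intersections commute with images.

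\textbf{Part (1).} Assume $\llbracket c_1 \rrbracket_A \cup \llbracket c_2 \rrbracket_A \subseteq \operatorname{dom}(\alpha)$ and $\text{verdict}_A(c_1,c_2)=\textsc{Conflict}$. First I would rule out the indeterminate case. By Definition~\ref{def:conflict}, \textsc{Conflict} means $\llbracket c_1\rrbracket_A \sqcap \llbracket c_2\rrbracket_A=\emptyset$. The subset hypothesis only makes sense for grounded denotations, so I read it as presupposing that both are grounded, with $\llbracket c_1\rrbracket_A\cap\llbracket c_2\rrbracket_A=\emptyset$.

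I also need the grounding values $g_i$ to be mapped, so that $\alpha(c_i)\neq\top$. For the hierarchical operators this holds because $g_i\in{\downarrow}g_i=\llbracket c_i\rrbracket_A$ by reflexivity. For \texttt{eq} it holds directly. For the remaining operators (\texttt{neq}, \texttt{hasPart}, \texttt{isNoneOf}) I will need a separate check or an explicit side condition. I expect this to be the main obstacle: it is where the lemma's phrase ``monotone closure'' must be made precise. The first thing I would try is to restrict the claim to the operators covered by Lemma~\ref{lem:denotation-equality}, and to note that complement operators still work because $\mathcal{KB}'_B$ is restricted to $\alpha(C_A)$.

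Once the $g_i$ are mapped, Lemma~\ref{lem:denotation-equality} gives $\llbracket\alpha(c_i)\rrbracket_{\mathcal{KB}'_B}=\alpha(\llbracket c_i\rrbracket_A)$, and both are grounded. Injectivity then gives $\alpha(S_1)\cap\alpha(S_2)=\alpha(S_1\cap S_2)=\alpha(\emptyset)=\emptyset$. Hence $\sqcap$ returns $\emptyset$ and the aligned verdict is \textsc{Conflict}.

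\textbf{Part (2).} I would argue by cases on whether all grounding values are mapped.

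\emph{Case (a):} some $\alpha(g_i)=\bot$. Definition~\ref{def:constraint-alignment} sets $\alpha(c_i)=\top$. If the other denotation is also $\top$ or nonempty, $\sqcap$ yields $\top$ and the verdict is \textsc{Unknown}. The first clause of $\sqcap$ could still yield \textsc{Conflict} if the other aligned denotation is $\emptyset$. But that denotation is an image under $\alpha$ of an empty source denotation, so the source verdict was already \textsc{Conflict} by Definition~\ref{def:intersection}, and the result is not false.

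\emph{Case (b):} all $\alpha(g_i)\neq\bot$. Witness completeness gives ${\downarrow}g_i\subseteq\operatorname{dom}(\alpha)$, hence $\llbracket c_i\rrbracket_A\subseteq\operatorname{dom}(\alpha)$ for the downward-closed operators. So the hypothesis ``$\llbracket c_i\rrbracket_A\not\subseteq\operatorname{dom}(\alpha)$'' is vacuous under witness completeness. By Lemma~\ref{lem:denotation-equality}, the aligned verdict equals the source verdict, and part~(1) covers the \textsc{Conflict} direction. The other direction holds because $\alpha(S_1\cap S_2)\neq\emptyset$ whenever $S_1\cap S_2\neq\emptyset$, so \textsc{Compatible} is also preserved.

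The claim that the failure can occur without witness completeness is established by Example~\ref{ex:witness-loss}, which I would cite as the witness.

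Combining the two cases shows that an aligned \textsc{Conflict} arises only when the source verdict is \textsc{Conflict}, so no false \textsc{Conflict} is ever produced.
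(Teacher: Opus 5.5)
Your route is the paper's: Lemma~\ref{lem:denotation-equality} plus injectivity for (1), and for (2) a case split on whether the grounding values are mapped, with witness completeness eliminating the partial case. You are more careful than the paper's proof in two places. It asserts $\top\sqcap D=\top$, overlooking the $\emptyset$-clause you handle in case~(a), and in (1) it never checks that $\alpha(c_i)\neq\top$. However, the obstacle you flag cannot be removed by a separate check, because the proposition as stated is false for some operators.

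In (1), the hypothesis forces $g_i\in\operatorname{dom}(\alpha)$ whenever $g_i\in\llbracket c_i\rrbracket_A$. That covers \texttt{eq}, \texttt{isA}, \texttt{isPartOf}, \texttt{isAnyOf}, and also \texttt{hasPart}, since reflexivity gives $g\in{\uparrow}g$; so \texttt{hasPart} is not a problem there. The operators whose denotation can omit their own grounding values are \texttt{neq}, \texttt{isNoneOf}, and \texttt{isAllOf}. For instance, take $C_A=\{g\}$, the empty alignment, and $c_1=c_2=(\ell,\texttt{neq},v_g)$. Both source denotations are $\emptyset$, so the hypothesis holds and the source verdict is \textsc{Conflict}. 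Yet $\alpha(c_i)=\top$, and the aligned verdict is \textsc{Unknown}. \texttt{isNoneOf} and \texttt{isAllOf} fail the same way. The fix is an explicit side condition $g_i\in\operatorname{dom}(\alpha)$ for all grounding values. With it, Lemma~\ref{lem:denotation-equality} holds for all eight operators (for the complements precisely because $\mathcal{KB}'_B$ lives on $\alpha(C_A)$, as you note), and your part~(1) argument goes through verbatim.

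In (2), witness completeness only guarantees ${\downarrow}g\subseteq\operatorname{dom}(\alpha)$. Your case~(b) therefore covers exactly the operators with $\llbracket c\rrbracket_A\subseteq\bigcup_i{\downarrow}g_i$, namely \texttt{eq}, \texttt{isA}, \texttt{isPartOf}, \texttt{isAnyOf}, and \texttt{isAllOf}. Case~(a) tacitly needs the other constraint to be in that class too, since you pull $\emptyset$ back through Lemma~\ref{lem:denotation-equality}.

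For \texttt{neq}, \texttt{isNoneOf}, and \texttt{hasPart}, false conflicts do occur even with every grounding value mapped. Take $C_A=\{g,u\}$ with trivial order and $\alpha=\{g\mapsto g'\}$, which is injective, order- and disjointness-preserving, and witness complete. Then $c_1=c_2=(\ell,\texttt{neq},v_g)$ is \textsc{Compatible} in $\mathcal{KB}_A$, since both denote $\{u\}$. But $\llbracket\alpha(c_i)\rrbracket_{\mathcal{KB}'_B}=\{g'\}\setminus\{g'\}=\emptyset$, a fabricated \textsc{Conflict}. The upward mirror of Example~\ref{ex:witness-loss} does the same: incomparable $a,b$ below an unmapped $t$, with \texttt{hasPart}~$v_a$ against \texttt{hasPart}~$v_b$.

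The paper's proof has the same blind spot, so your closing claim that no false \textsc{Conflict} is ever produced goes beyond what either argument establishes. Instead, either state (2) for the five operators listed above, or strengthen witness completeness to require $\llbracket c\rrbracket_A\subseteq\operatorname{dom}(\alpha)$ whenever the grounding values of $c$ are mapped.
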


\begin{proof}
(1) \textsc{Conflict} implies 
$\llbracket c_1 \rrbracket_A \cap \llbracket c_2 \rrbracket_A = 
\emptyset$ with both sets classical (not $\top$). Injectivity 
preserves disjointness: 
$\alpha(\llbracket c_1 \rrbracket_A) \cap 
\alpha(\llbracket c_2 \rrbracket_A) = \emptyset$. By 
Lemma~\ref{lem:denotation-equality}, 
$\llbracket \alpha(c_1) \rrbracket_{\mathcal{KB}'_B} \cap 
\llbracket \alpha(c_2) \rrbracket_{\mathcal{KB}'_B} = \emptyset$, 
yielding \textsc{Conflict}.

(2) When $\alpha(g) = \bot$ for a grounding value, 
Definition~\ref{def:constraint-alignment} yields $\top$, and 
$\top \sqcap D = \top$ by Definition~\ref{def:intersection}, 
producing \textsc{Unknown}. When $\alpha(g_i) = \bot$ for a grounding value, 
$\alpha(c_i) = \top$ by Definition~\ref{def:constraint-alignment}, 
yielding \textsc{Unknown}. When denotations partially exceed 
$\operatorname{dom}(\alpha)$, witness completeness ensures this 
cannot occur for grounded constraints. In neither case can a false 
\textsc{Conflict} arise.
\end{proof}

The monotonicity guarantee in Proposition~\ref{prop:alignment}(2) is 
the central safety property: under witness completeness, alignment can 
only weaken verdicts toward \textsc{Unknown}, never fabricate conflicts 
(Example~\ref{ex:witness-loss}).

\begin{corollary}[Subsumption Preservation]
\label{cor:subsumption-preservation}
Let $\alpha: C_A \rightharpoonup C_B$ be an alignment. If 
$\llbracket c_1 \rrbracket_A \cup \llbracket c_2 \rrbracket_A 
\subseteq \operatorname{dom}(\alpha)$ and 
$c_1 \sqsubseteq_c c_2$ in $\mathcal{KB}_A$, then 
$\alpha(c_1) \sqsubseteq_c \alpha(c_2)$ in $\mathcal{KB}'_B$.
\end{corollary}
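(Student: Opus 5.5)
The plan is to follow the pattern of the proof of Proposition~\ref{prop:alignment}(1): push both denotations through $\alpha$ using Lemma~\ref{lem:denotation-equality}, then use elementary set facts. I will first unfold $c_1 \sqsubseteq_c c_2$ via Definition~\ref{def:subsumption}. This gives that $\llbracket c_1 \rrbracket_A$ and $\llbracket c_2 \rrbracket_A$ are both grounded (not $\top$). It also gives $\llbracket c_1 \rrbracket_A \subseteq \llbracket c_2 \rrbracket_A$.

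Next I will confirm that the aligned constraints are themselves grounded. By the grounding argument used in Proposition~\ref{prop:alignment}(2), the grounding values $g_i$ are mapped: for operators whose denotation contains $g_i$ (\texttt{eq}, \texttt{isA}, \texttt{isPartOf}, \texttt{hasPart}, and the set-valued ones), this follows from $g_i \in \llbracket c_i \rrbracket_A \subseteq \operatorname{dom}(\alpha)$. Definition~\ref{def:constraint-alignment} then yields genuine constraints $\alpha(c_1), \alpha(c_2)$ over $\mathcal{KB}'_B$ rather than $\top$.

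With the hypothesis $\llbracket c_i \rrbracket_A \subseteq \operatorname{dom}(\alpha)$ in hand, Lemma~\ref{lem:denotation-equality} gives $\llbracket \alpha(c_i) \rrbracket_{\mathcal{KB}'_B} = \alpha(\llbracket c_i \rrbracket_A)$ for $i=1,2$. Both sides are classical finite sets, so neither aligned denotation is $\top$. Taking images under $\alpha$ is monotone with respect to $\subseteq$ for any function, so $\alpha(\llbracket c_1 \rrbracket_A) \subseteq \alpha(\llbracket c_2 \rrbracket_A)$. Hence $\llbracket \alpha(c_1) \rrbracket \subseteq \llbracket \alpha(c_2) \rrbracket$ with both grounded, and Definition~\ref{def:subsumption} returns \textsc{Confirmed}, i.e.\ $\alpha(c_1) \sqsubseteq_c \alpha(c_2)$. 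Injectivity is not needed here, unlike in conflict preservation.

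The main obstacle is the applicability of Lemma~\ref{lem:denotation-equality} to every operator. Its proof is written for \texttt{isA} and only claimed to extend to ``monotone closure'' operators. Complement-based operators (\texttt{neq}, \texttt{isNoneOf}) are evaluated over the restricted concept space $\alpha(C_A)$ rather than $C_A$. For these I would check directly that, because $\alpha$ is a bijection onto $\alpha(C_A)$ and reflects $\leq$, complements in $C_A$ correspond exactly to complements in $\alpha(C_A)$. Those cases also need the hypothesis to ensure the $g_i$ are mapped, which may require a brief extra remark or a mild extension of the hypothesis. I would then invoke the lemma uniformly across all operators.
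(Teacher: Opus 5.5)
Your proposal follows the paper's proof: apply Lemma~\ref{lem:denotation-equality} to both constraints, then push $\llbracket c_1\rrbracket_A\subseteq\llbracket c_2\rrbracket_A$ through the image map. You are right that this final step needs no injectivity, although the paper invokes it there. Injectivity is actually needed inside the lemma for the complement operators, to get $\alpha(C_A\setminus S)=\alpha(C_A)\setminus\alpha(S)$.

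Your groundedness step has a slip. The denotation of \texttt{isAllOf} need not contain the $g_i$: in a nominal KB with $a\neq b$, $\llbracket \ell\ \texttt{isAllOf}\ \{v_a,v_b\}\rrbracket_A=\emptyset$. So \texttt{isAllOf} belongs with \texttt{neq} and \texttt{isNoneOf}.

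For those operators your worry is not a technicality. The hypothesis $\llbracket c_i\rrbracket_A\subseteq\operatorname{dom}(\alpha)$ does not force $\alpha(g_i)\neq\bot$, and the corollary as stated then fails. Take a nominal $C_A=\{a,b\}$ with $\operatorname{dom}(\alpha)=\{b\}$; all alignment conditions, including witness completeness, hold. Let $c_1=c_2=(\ell,\texttt{neq},v_a)$. Then $\llbracket c_i\rrbracket_A=\{b\}\subseteq\operatorname{dom}(\alpha)$ and $c_1\sqsubseteq_c c_2$. But $\alpha(a)=\bot$ gives $\alpha(c_i)=\top$ by Definition~\ref{def:constraint-alignment}, so $\operatorname{subsumes}(\alpha(c_1),\alpha(c_2))=\textsc{Unknown}$. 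The paper's proof does not notice this because Lemma~\ref{lem:denotation-equality} is only proved for \texttt{isA}.

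The extra hypothesis you anticipate is therefore required, not a mild option: $\alpha(g_i)\neq\bot$ for every grounding value. With it, the cases go through as follows:
\begin{itemize}
\item \texttt{isAllOf} works by order reflection, exactly as for \texttt{isA}.
\item For \texttt{neq} and \texttt{isNoneOf}, witness completeness gives ${\downarrow}g_i\subseteq\operatorname{dom}(\alpha)$. Together with $\llbracket c_i\rrbracket_A\subseteq\operatorname{dom}(\alpha)$ this forces $\operatorname{dom}(\alpha)=C_A$.
\end{itemize}
That last fact is what your bijection argument about complements silently needs. Otherwise $\alpha$ is a bijection onto $\alpha(C_A)$ only from $\operatorname{dom}(\alpha)$, and complements taken in $C_A$ do not correspond to complements in $\alpha(C_A)$.
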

\begin{proof}
By Lemma~\ref{lem:denotation-equality}, 
$\llbracket \alpha(c_i) \rrbracket_{\mathcal{KB}'_B} = 
\alpha(\llbracket c_i \rrbracket_A)$ for $i = 1, 2$. Injectivity 
of $\alpha$ preserves subset inclusion: 
$\llbracket c_1 \rrbracket_A \subseteq 
\llbracket c_2 \rrbracket_A$ implies 
$\alpha(\llbracket c_1 \rrbracket_A) \subseteq 
\alpha(\llbracket c_2 \rrbracket_A)$.
\end{proof}

\begin{proposition}[KB Monotonicity]
\label{prop:kb-monotonicity}
Let $\mathcal{KB} = (C, \leq, \perp\!\!\!\perp, \gamma)$ and 
$\mathcal{KB}^+ = (C, \leq, \perp\!\!\!\perp^+, \gamma^+)$ with 
${\perp\!\!\!\perp} \subseteq {\perp\!\!\!\perp^+}$ and $\gamma^+$ 
extending $\gamma$ (same $C$ and $\leq$). Then:
\begin{enumerate}
\item If $\operatorname{verdict}_{\mathcal{KB}}(c_1, c_2) = 
  \textsc{Conflict}$, then 
  $\operatorname{verdict}_{\mathcal{KB}^+}(c_1, c_2) = 
  \textsc{Conflict}$.
\item If $\operatorname{verdict}_{\mathcal{KB}}(c_1, c_2) = 
  \textsc{Compatible}$, then 
  $\operatorname{verdict}_{\mathcal{KB}^+}(c_1, c_2) = 
  \textsc{Compatible}$.
\item If $\operatorname{verdict}_{\mathcal{KB}}(c_1, c_2) = 
  \textsc{Unknown}$, the verdict may resolve to any value.
\end{enumerate}
\end{proposition}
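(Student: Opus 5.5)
The plan rests on one observation about Definition~\ref{def:denotation}: a denotation $\llbracket c \rrbracket_{\mathcal{KB}}$ depends only on $C$, $\leq$, and $\gamma$, and never on $\perp\!\!\!\perp$. I would first prove a \emph{stability lemma}: if $\llbracket c \rrbracket_{\mathcal{KB}} \neq \top$, then $\llbracket c \rrbracket_{\mathcal{KB}^+} = \llbracket c \rrbracket_{\mathcal{KB}}$. The proof goes operator by operator. A grounded denotation means every relevant value $v$ (or every $v_i$ for \texttt{isAnyOf}, \texttt{isAllOf}, \texttt{isNoneOf}) has $\gamma(v) \neq \bot$. Since $\gamma^+$ extends $\gamma$, we get $\gamma^+(v) = \gamma(v)$ for each of them. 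Each clause of Definition~\ref{def:denotation} is then evaluated over the same $C$ and $\leq$ with the same $g$ or $g_i$, so it returns the same set. The enlargement ${\perp\!\!\!\perp} \subseteq {\perp\!\!\!\perp^+}$ plays no role, because no clause mentions disjointness.

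Parts~(1) and~(2) then follow by cases on Definition~\ref{def:intersection}.

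\emph{Conflict.} There are two subcases. If both denotations are grounded with empty intersection, the stability lemma gives the identical sets in $\mathcal{KB}^+$, so the verdict is again \textsc{Conflict}. If one denotation, say $\llbracket c_1 \rrbracket_{\mathcal{KB}}$, equals $\emptyset$, then it is grounded, so $\llbracket c_1 \rrbracket_{\mathcal{KB}^+} = \emptyset$ by the lemma. The first clause of $\sqcap$ then yields $\emptyset$, whatever $\llbracket c_2 \rrbracket_{\mathcal{KB}^+}$ has become: it may stay $\top$ or become grounded.

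\emph{Compatible.} The result lies outside $\{\emptyset, \top\}$, so both denotations are grounded and nonempty with nonempty intersection. The lemma transfers this unchanged to $\mathcal{KB}^+$.

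For part~(3) I would read ``may resolve to any value'' as an existence claim and give three witnesses, each with an ungrounded value $v$ ($\gamma(v) = \bot$) that $\gamma^+$ maps into $C$.
\begin{itemize}
\item \textbf{Unknown.} Take $\gamma^+ = \gamma$; the verdict stays \textsc{Unknown}.
\item \textbf{Compatible.} Take $C = \{a\}$, $c_1 = (\ell, \texttt{eq}, v)$, and $c_2 = (\ell, \texttt{eq}, w)$ with $\gamma(w) = a$ and $\gamma^+(v) = a$. The verdict moves from \textsc{Unknown} to \textsc{Compatible}.
\item \textbf{Conflict.} Take $C = \{a, b\}$ with $\leq$ the identity, and set $\gamma^+(v) = b$ instead. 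Then $\{b\} \cap \{a\} = \emptyset$, so the verdict becomes \textsc{Conflict}.
\end{itemize}

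The main obstacle is the mixed subcase of part~(1), where one side is $\emptyset$ and the other is $\top$. There the $\top$ side may become grounded in $\mathcal{KB}^+$, so the argument cannot rely on both denotations being unchanged. It is handled by using the absorbing first clause of $\sqcap$ rather than the stability lemma on both sides.

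I would also remark that $\perp\!\!\!\perp^+$ only matters for the EPR encoding, where the \emph{only-if} axioms use disjointness to \emph{prove} emptiness. At the semantic level, adding disjointness facts consistent with Assumption~\ref{assm:kb-correctness} cannot alter any set computed by Definition~\ref{def:denotation}.
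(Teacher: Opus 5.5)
Your proof is correct and follows the same route as the paper. Grounded denotations are unchanged because they depend only on $C$, $\leq$, and the already-defined part of $\gamma$ (never on $\perp\!\!\!\perp$), followed by a case split on $\sqcap$. Your explicit handling of the mixed subcase $\emptyset \sqcap \top$ in part~(1), and your concrete witnesses for part~(3), are more careful than the paper's argument: it asserts that any ungrounded constraint forces the original verdict to be \textsc{Unknown}, overlooking that the first clause of Definition~\ref{def:intersection} already yields \textsc{Conflict} there.
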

\begin{proof}
(1)~and~(2): Denotations depend on $C$ and $\leq$, both 
unchanged. Thus $\llbracket c_i \rrbracket_{\mathcal{KB}^+} = 
\llbracket c_i \rrbracket_{\mathcal{KB}}$ for all grounded 
constraints. Empty intersections remain empty; non-empty 
intersections remain non-empty. For previously ungrounded 
constraints ($\llbracket c \rrbracket = \top$), the original 
verdict was \textsc{Unknown}, so neither case~(1) nor~(2) 
arises.

(3)~Extending $\gamma$ replaces $\top$ with a classical set, 
enabling definite intersection and yielding either 
\textsc{Conflict} or \textsc{Compatible}.
\end{proof}

\subsubsection{Runtime Semantics}
\label{sec:runtime}
The denotational semantics reasons about \emph{concepts}---abstract 
KB entities like \texttt{gn:France} or \texttt{dpv:ScientificResearch}. 
At runtime, a policy engine receives concrete requests with actual 
values. We bridge this gap with two definitions and a soundness result.

\begin{definition}[Execution Context]
\label{def:context}
An execution context $\omega: \mathcal{L} \rightharpoonup \mathcal{V}$ 
is a partial function assigning right operand values to left operands, 
representing a concrete access request.
\end{definition}

\begin{definition}[Constraint Satisfaction]
\label{def:satisfaction}
Given $\mathcal{KB}_\ell = (C, \leq, \perp\!\!\!\perp, \gamma)$, 
context $\omega$ satisfies constraint $c = (\ell, \bowtie, v)$, 
written $\omega \models c$, if:
\begin{equation}
\ell \in \operatorname{dom}(\omega) \land 
\gamma(\omega(\ell)) \neq \bot \land 
\big(\llbracket c \rrbracket = \top \lor 
\gamma(\omega(\ell)) \in \llbracket c \rrbracket\big)
\end{equation}
When $\gamma(\omega(\ell)) = \bot$---the runtime value has no KB 
counterpart---the context does not satisfy the constraint. This 
default-deny stance at enforcement complements the conservative 
\textsc{Unknown} at analysis time: ungrounded values block neither 
policy authoring nor conflict detection, but do block runtime access 
until the KB is extended.
\end{definition}

\begin{theorem}[Runtime Soundness]
\label{thm:refined-soundness}
For constraints $c_1 = (\ell, \bowtie_1, v_1)$ and 
$c_2 = (\ell, \bowtie_2, v_2)$ over the same operand, 
if $\operatorname{verdict}(c_1, c_2) = \textsc{Conflict}$, then 
$\neg\exists\omega : (\omega \models c_1 \land \omega \models c_2)$.
\end{theorem}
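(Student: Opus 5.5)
I will argue by contradiction, unfolding Definition~\ref{def:conflict} and Definition~\ref{def:satisfaction} directly rather than going through the informal statement of Theorem~\ref{thm:soundness}. Assume $\operatorname{verdict}(c_1,c_2)=\textsc{Conflict}$, so $\llbracket c_1\rrbracket\sqcap\llbracket c_2\rrbracket=\emptyset$. Suppose for contradiction that some context $\omega$ satisfies both $\omega\models c_1$ and $\omega\models c_2$. By Definition~\ref{def:satisfaction}, $\ell\in\operatorname{dom}(\omega)$, and $x:=\gamma(\omega(\ell))\neq\bot$ is a genuine concept in $C$. Because both constraints share the operand $\ell$ and the same $\omega$, this single $x$ serves as the witness for both memberships. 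This shared witness is what makes the argument go through.

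Next I split according to the first two clauses of Definition~\ref{def:intersection}, since these are the only ways $\sqcap$ can return $\emptyset$.

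\emph{Case 1: both denotations are classical}, with $\llbracket c_1\rrbracket\cap\llbracket c_2\rrbracket=\emptyset$. Neither denotation is $\top$, so the disjunct $\llbracket c_i\rrbracket=\top$ in Definition~\ref{def:satisfaction} is false for each $i$. Hence $x\in\llbracket c_1\rrbracket$ and $x\in\llbracket c_2\rrbracket$, contradicting the emptiness of the intersection.

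\emph{Case 2: one denotation is $\emptyset$}, say $\llbracket c_1\rrbracket=\emptyset$ (the other may be $\top$). Since $\emptyset\neq\top$, satisfaction of $c_1$ again forces $x\in\llbracket c_1\rrbracket=\emptyset$, which is impossible. I do not need any information about $c_2$ here.

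The only delicate point is making sure the ``$\top$ escape'' in Definition~\ref{def:satisfaction} cannot be used in a \textsc{Conflict} verdict. I expect this to be the main obstacle. A \textsc{Conflict} verdict never arises from $\top\sqcap\top$, and never from $\top\sqcap S$ with $S$ nonempty, because the third clause of $\sqcap$ yields $\top$ in those cases. So whenever a $\top$ appears, the other side is exactly $\emptyset$, and that side alone blocks satisfaction, as in Case~2.

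I also note that, unlike Theorem~\ref{thm:soundness}, this argument does not need Assumption~\ref{assm:kb-correctness}. Satisfaction is defined through the KB's own $\gamma$ and denotations, so the result is a purely definitional consequence.
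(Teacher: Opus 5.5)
Your proof is correct and takes the same route as the paper: assume some $\omega$ satisfies both constraints, observe that the single grounded value $\gamma(\omega(\ell))$ must lie in both denotations, and contradict the empty intersection. Your explicit Case~2 is a genuine improvement, because the paper's proof asserts that \textsc{Conflict} forces both denotations to be grounded and so skips the $\emptyset \sqcap \top = \emptyset$ case that Definition~\ref{def:intersection} permits; your argument covers that case.
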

\begin{proof}
Suppose $\omega \models c_1 \land \omega \models c_2$. 
By Definition~\ref{def:satisfaction}, 
$\ell \in \operatorname{dom}(\omega)$ and 
$\gamma(\omega(\ell)) \neq \bot$. 
Since $\operatorname{verdict}(c_1, c_2) = \textsc{Conflict}$, 
Definition~\ref{def:conflict} requires both denotations are grounded 
(not $\top$) with $\llbracket c_1 \rrbracket \cap 
\llbracket c_2 \rrbracket = \emptyset$.
Therefore $\gamma(\omega(\ell)) \in \llbracket c_1 \rrbracket$ 
and $\gamma(\omega(\ell)) \in \llbracket c_2 \rrbracket$, implying 
$\llbracket c_1 \rrbracket \cap \llbracket c_2 \rrbracket \neq 
\emptyset$---contradicting the conflict verdict.
\end{proof}

This result justifies \emph{static policy analysis}: policy 
administrators can reject incompatible constraint combinations during 
authoring, before any data flows. Soundness does not imply 
completeness: \textsc{Unknown} may conceal conflicts that only 
manifest at runtime when additional context resolves KB gaps. All 
meta-theorems (Theorems~\ref{thm:soundness}--\ref{thm:refined-soundness}, 
Lemmas~\ref{lem:disj-consistency}--\ref{lem:conflict-propagation}, 
Propositions~\ref{prop:decidability}--\ref{prop:kb-monotonicity}, 
Corollary~\ref{cor:subsumption-preservation}) are mechanically 
verified in Isabelle/HOL; the theory file is included in the 
supplementary material.

\subsection{Mechanical Verification}
\label{sec:isabelle}

All meta-theorems are mechanically verified in Isabelle/HOL 2025: The 
formalization uses three locales: \texttt{knowledge\_base} encodes Definition~\ref{def:kb}. 
  Carries 11 results including soundness 
  (Theorems~\ref{thm:soundness}--\ref{thm:composition-soundness}), 
  conflict propagation (Lemma~\ref{lem:conflict-propagation}), and 
  disjointness--order consistency 
  (Lemma~\ref{lem:disj-consistency}).
\texttt{kb\_extension} models KB evolution with extended 
  $\gamma$ and $\perp\!\!\!\perp$. Verifies KB monotonicity 
  (Proposition~\ref{prop:kb-monotonicity}).
\texttt{alignment} formalizes 
  Definition~\ref{def:alignment}. Verifies denotation, conflict, 
  and subsumption preservation 
  (Lemma~\ref{lem:denotation-equality}, 
  Proposition~\ref{prop:alignment}, 
  Corollary~\ref{cor:subsumption-preservation}).

\noindent All proofs complete via automated tactics 
(\texttt{simp}, \texttt{auto}, \texttt{blast}) without manual 
proof scripts---reflecting that definitions were chosen to 
remain within the decidable fragment Isabelle's simplifier 
handles natively. Table~\ref{tab:isabelle} gives the full 
correspondence; the theory file is in the supplementary material.

\begin{table}[t]
\centering
\caption{Paper results and Isabelle/HOL mechanization. \textbf{44 results}, 0 \texttt{sorry}.}
\label{tab:isabelle}
\small
\begin{tabular}{@{}lll@{}}
\toprule
\textbf{Paper Result} & \textbf{Isabelle Name} & \textbf{Locale} \\
\midrule
Lemma~\ref{lem:disj-consistency} & \texttt{disj\_order\_consistency} & \texttt{knowledge\_base} \\
Theorem~\ref{thm:soundness} & \texttt{soundness} & \texttt{knowledge\_base} \\
Theorem~\ref{thm:refined-soundness} & \texttt{runtime\_soundness} & \texttt{knowledge\_base} \\
Theorem~\ref{thm:composition-soundness} & \texttt{composition\_soundness\_*} & \texttt{knowledge\_base} \\
Lemma~\ref{lem:conflict-propagation} & \texttt{conflict\_propagation} & \texttt{knowledge\_base} \\
Proposition~\ref{prop:kb-monotonicity} & \texttt{kb\_monotonicity\_verdict} & \texttt{kb\_extension} \\
Lemma~\ref{lem:denotation-equality} & \texttt{denotation\_preservation\_isA} & \texttt{alignment} \\
Proposition~\ref{prop:alignment}(1) & \texttt{verdict\_preservation\_conflict\_full} & \texttt{alignment} \\
Corollary~\ref{cor:subsumption-preservation} & \texttt{subsumption\_preservation\_verdict} & \texttt{alignment} \\
\bottomrule
\end{tabular}
\end{table}
\section{Evaluation}
\label{sec:evaluation}
We validate the framework through 154 benchmarks across 
19~categories, evaluated by two independent provers: Vampire 
(superposition calculus) and Z3 (DPLL(T)). All problems are 
encoded in both TPTP and SMT-LIB, enabling dual-prover 
concordance as a correctness check. Both provers agree on all 
154~verdicts.
\subsection{Benchmark Design}
\label{sec:bench-design}
Each benchmark pairs two ODRL constraints over one or more 
operands, grounded against concrete knowledge bases. The 
negated-conjecture pattern (Table~\ref{tab:verdicts}) maps 
each verdict to a unique prover output. Sixteen Layer~0 
axiom files instantiate three semantic domains 
(Table~\ref{tab:kbs}).
\begin{table}[t]
\centering
\caption{Knowledge bases and axiom files used in the benchmark 
suite. Original KBs provide primary domain knowledge; 
alternative KBs test cross-standard alignment; auxiliary files 
supply UNA axioms and structural edge cases.}
\label{tab:kbs}
\small
\begin{tabular}{llcl}
\toprule
\textbf{Operand} & \textbf{KB} & \textbf{$|C|$} & \textbf{Role} \\
\midrule
\texttt{spatial}  & GEO000 &  4 & GeoNames (primary) \\
                  & GEO001 &  3 & ISO~3166 (alignment) \\
                  & GEO002 &  --- & UNA + negative \texttt{partOf} \\
\texttt{purpose}  & DPV000 & 10 & W3C DPV (primary) \\
                  & DPV001 &  6 & GDPR-derived (alignment) \\
                  & DPV002 &  --- & Same-level UNA \\
                  & DPV003 &  --- & Cross-level UNA \\
\texttt{language} & LNG000 & 10 & BCP~47 (primary) \\
                  & LNG001 &  6 & ISO~639-3 (alignment) \\
                  & LNG002 &  --- & UNA \\
\texttt{channel}  & NOM000 &  4 & Nominal (delivery channels) \\
\addlinespace[3pt]
\multicolumn{4}{l}{\textit{Structural KBs (operator stress tests)}} \\
                  & CHN000 &  5 & Depth-5 linear chain \\
                  & DIA000 &  4 & Diamond inheritance (DAG) \\
                  & SNG000 &  1 & Single-concept (domain closure) \\
                  & NMS000 &  6 & Near-miss overlap \\
\bottomrule
\end{tabular}
\end{table}
\subsection{Operator Coverage}
\label{sec:eval-operators}
The first 90~problems validate Definition~\ref{def:denotation} 
across all eight operators (\texttt{eq}, \texttt{neq}, 
\texttt{isA}, \texttt{isPartOf}, \texttt{hasPart}, 
\texttt{isAnyOf}, \texttt{isAllOf}, \texttt{isNoneOf}) and 
all three semantic domains (taxonomic, mereological, nominal).
Problems 010--056 cover the six original operators over 
GeoNames, DPV, and BCP~47. Problems 090--096 introduce 
\texttt{neq} (complement denotation $C \setminus \{g\}$), 
including the edge case where $|C|=1$ collapses the complement 
to $\emptyset$ (ODRL095). Problems 100--106 test 
\texttt{hasPart} (upward closure $\{x \mid g \leq x\}$), 
validating that it produces the inverse of \texttt{isPartOf}. 
Problems 110--147 extend \texttt{isAnyOf}, \texttt{isAllOf}, 
and \texttt{isNoneOf} coverage to mereological and nominal 
domains, and introduce 12~cross-operator pair tests (150--161) 
that stress interactions absent from single-operator benchmarks.
Three structural KBs target adversarial patterns: a depth-5 
chain validates transitive closure (170--171), a diamond DAG 
validates multiple-inheritance (172--174), and a single-concept 
KB validates degenerate cases where complements collapse 
(175--176). The near-miss KB (177--178) tests the boundary 
between \textsc{Compatible} and \textsc{Conflict} when 
denotations overlap on exactly one concept.
\paragraph{Nominal domain (140--147).}
Under identity ($\leq\,{=}\,{=}$), \texttt{isA} degenerates 
to \texttt{eq} and \texttt{isAllOf} requires all values to 
coincide. All eight nominal tests confirm this degeneration, 
validating the claim in Definition~\ref{def:kb} that operator 
semantics is determined by the KB relation, not the operator 
name.
\subsection{Logical Composition}
\label{sec:eval-composition}
Twenty-one problems validate Definition~\ref{def:composition} 
across all three ODRL composition modes.
Conjunction (\texttt{and}, 030--033 + 200--201): 
multi-operand \texttt{and} requires all dimensions to overlap 
simultaneously. A single \textsc{Conflict} or \textsc{Unknown} 
blocks the conjunction. ODRL200 confirms three-operand 
compatibility; ODRL201 confirms that a language conflict in 
the third operand blocks an otherwise-compatible pair.
Disjunction (\texttt{or}, 080--084 + 202--203): at least one 
branch must overlap. ODRL202 confirms that a single 
\textsc{Compatible} branch resolves the disjunction even when 
the other is \textsc{Unknown}. ODRL206 tests nested 
spatial~$\wedge$~or(purpose). Exclusive disjunction (\texttt{xone}, 085--088 + 204--207): 
the key finding is that \texttt{xone} requires strictly 
stronger KB axioms than conjunction or disjunction. 
Table~\ref{tab:xone-comparison} shows that asymmetric 
negative-axiom coverage in the DPV KB produces different 
verdicts for structurally symmetric problems.
By design, composition operators are evaluated at the 
meta-level: the prover returns per-branch verdicts, and 
Definition~\ref{def:composition} combines them via the 
truth table. This decomposition is essential---encoding 
\texttt{xone} directly would require $\exists$ under 
negation ($\forall\exists\forall$ prefix), breaking the 
EPR fragment and forfeiting the decidability guarantee.
\begin{table}[t]
\centering
\caption{XONE requires explicit negative axioms. Both problems 
test \texttt{xone(commercial, nonCommercial)} but with different 
request values. The asymmetric KB coverage produces different 
verdicts.}
\label{tab:xone-comparison}
\small
\begin{tabular}{llccc}
\toprule
\textbf{Problem} & \textbf{Request} & \textbf{$\sqsubseteq$ branch?} 
  & \textbf{$\lnot\sqsubseteq$ other?} & \textbf{Verdict} \\
\midrule
ODRL085 & nonCommRes & \checkmark & explicit & \textsc{Compatible} \\
ODRL086 & commRes    & \checkmark & absent   & \textsc{Unknown} \\
\bottomrule
\end{tabular}
\end{table}
\subsection{Cross-KB Alignment}
\label{sec:eval-alignment}
Twenty-three problems validate 
Proposition~\ref{prop:alignment}. The original 13~problems 
(057--066) test three alignment pairs with complete concept 
coverage. The extension (190--199) targets adversarial edge 
cases: unmapped witnesses that force degradation (190--192), 
operator coverage across aligned KBs (\texttt{isAnyOf}, 
\texttt{isNoneOf}, \texttt{hasPart}, \texttt{neq}: 193--197), 
and multi-domain alignment where all three operands degrade 
simultaneously (199).
Across all 23~tests, no false \textsc{Conflict} arises. 
Degradation tests include paired checks: for each case, both 
the compatibility and conflict conjectures return 
\textsc{CounterSatisfiable}, confirming that the verdict is 
genuinely indeterminate rather than accidentally resolved.
\subsection{Runtime Semantics}
\label{sec:eval-runtime}
Six problems (070--075) bridge concept-level reasoning to 
runtime enforcement 
(Theorem~\ref{thm:refined-soundness}): witness extraction for 
\textsc{Compatible} pairs, pointwise rejection for 
\textsc{Conflict} pairs, and an exhaustive finite-model check 
(ODRL073) confirming that no BCP~47 concept satisfies both 
constraints in a conflict pair.

\begin{remark}[Branch-Aware Exclusivity]
\label{rem:branch-xone}
Definition~\ref{def:composition} conservatively requires all 
cross-branch per-operand verdicts to be \textsc{Conflict} for 
\texttt{xone} compatibility. When branches use conjunction 
internally, this is stricter than necessary: a single 
per-operand \textsc{Conflict} suffices for mutual exclusivity 
by Theorem~\ref{thm:composition-soundness}, since one failing 
operand under \texttt{and} renders the entire branch 
unsatisfiable. The refined exclusivity test applies 
$\operatorname{verdict}_{\texttt{and}}$ to cross-branch 
per-operand verdicts:
\begin{equation}
\operatorname{excl}(B_j, B_k) = 
  \operatorname{verdict}_{\texttt{and}}\big(
  \operatorname{verdict}(c_j^{(\ell)}, c_k^{(\ell)}) 
  \mid \ell \in \operatorname{shared}(B_j, B_k)\big)
\end{equation}
where $\operatorname{shared}(B_j, B_k)$ denotes operands 
constrained in both branches. When branches share no operands, 
$\operatorname{excl}(B_j, B_k) = \textsc{Compatible}$ (vacuous 
conjunction), correctly reflecting that orthogonal branches 
cannot be proven exclusive. This refinement strictly reduces 
\textsc{Unknown} verdicts without affecting soundness: the 
proof chains through Theorem~\ref{thm:soundness}, 
Assumption~\ref{assm:operand-independence}, and 
Theorem~\ref{thm:composition-soundness}.
\end{remark}

\paragraph{Related Work.}
Concurrent standardization efforts address ODRL processor 
conformance~\cite{rodriguez2025conformance}, complementing our 
focus on constraint-level semantics. A comprehensive discussion 
of related work will appear in the extended version.

\section{Conclusion}
\label{sec:conclusion}
We presented a semantic grounding framework for ODRL constraint 
conflict detection, validated by a benchmark suite of 
154~problems across 19~categories with 100\% agreement 
between the Vampire theorem prover and the Z3 SMT solver~\cite{vampire2013,z3}.
The framework makes three contributions. First, a denotational 
semantics for all eight ODRL operators with a 
three-valued verdict (\textsc{Conflict}, \textsc{Compatible}, 
\textsc{Unknown}) that is sound under incomplete knowledge. 
Second, a decidable EPR encoding with bidirectional denotation 
rules and formal composition semantics for ODRL's \texttt{and}, 
\texttt{or}, and \texttt{xone} operators---revealing that 
exclusive composition requires strictly stronger KB axioms 
than conjunction or disjunction. Third, cross-KB alignment 
with proven verdict preservation and graceful degradation, 
bridging the gap between dataspaces that adopt different 
standards for the same operand.
The benchmark suite covers single-KB reasoning (90~problems), 
logical composition including all three ODRL modes 
(21~problems), cross-standard alignment with adversarial 
edge cases (23~problems), and runtime soundness validation 
(6~problems). The alignment and composition results---absent 
from all prior ODRL conflict detection work---demonstrate that 
semantic grounding enables interoperability across dataspaces 
without manual re-encoding and without introducing false 
conflicts.\paragraph{Future work.}
Three directions remain: extending coverage to the remaining 
11~KB-dependent operands, rule-level conflict detection composing 
constraint-level verdicts with permission/prohibition resolution, 
and a standardized KB registry enabling dataspaces to discover 
and reuse verified alignments.
\bibliographystyle{splncs04}
\bibliography{references}
\end{document}